\def\eqref#1{equation~\ref{#1}}
\def\1{\bm{1}}
\def\ve{{\bm{e}}}
\def\vg{{\bm{g}}}
\def\vw{{\bm{w}}}
\def\vx{{\bm{x}}}
\def\vy{{\bm{y}}}
\def\vz{{\bm{z}}}
\DeclareMathAlphabet{\mathsfit}{\encodingdefault}{\sfdefault}{m}{sl}
\SetMathAlphabet{\mathsfit}{bold}{\encodingdefault}{\sfdefault}{bx}{n}
\newcommand{\E}{\mathbb{E}}
\newcommand{\R}{\mathbb{R}}
\DeclareMathOperator*{\argmin}{arg\,min}
\theoremstyle{plain}
\newtheorem{theorem}{Theorem}[section]
\newtheorem{corollary}[theorem]{Corollary}
\theoremstyle{definition}
\newtheorem{definition}[theorem]{Definition}
\newtheorem{assumption}[theorem]{Assumption}
\theoremstyle{remark}
\icmltitlerunning{Stochastic (Proximal) Gradient Descent Jittering}
\begin{document}

\twocolumn[
\icmltitle{SGD Jittering: A Training Strategy for \\Robust and Accurate Model-Based Architectures}



\icmlsetsymbol{equal}{*}

\begin{icmlauthorlist}
\icmlauthor{Peimeng Guan}{yyy}
\icmlauthor{Mark A. Davenport}{yyy}
\end{icmlauthorlist}

\icmlaffiliation{yyy}{Department of Electrical and Computer Engineering, Georgia Institute of Technology, Atlanta, USA}

\icmlcorrespondingauthor{M. D.}{mdav@gatech.edu}

\icmlkeywords{Machine Learning, ICML}

\vskip 0.3in
]



\printAffiliationsAndNotice{}  

\begin{abstract}
Inverse problems aim to reconstruct unseen data from corrupted or perturbed measurements. While most work focuses on improving reconstruction quality, generalization accuracy and robustness are equally important, especially for safety-critical applications. Model-based architectures (MBAs), such as loop unrolling methods, are considered more interpretable and achieve better reconstructions. Empirical evidence suggests that MBAs are more robust to perturbations than black-box solvers, but the accuracy-robustness tradeoff in MBAs remains underexplored. In this work, we propose a simple yet effective training scheme for MBAs, called SGD jittering, which injects noise iteration-wise during reconstruction. We theoretically demonstrate that SGD jittering not only generalizes better than the standard mean squared error training but is also more robust to average-case attacks. We validate SGD jittering using denoising toy examples, seismic deconvolution, and single-coil MRI reconstruction. Both SGD jittering and its SPGD extension yield cleaner reconstructions for out-of-distribution data and demonstrates enhanced robustness against adversarial attacks.
\end{abstract}

\section{Introduction}
Inverse problems (IPs) aim to recover the underlying signal $\vx \in \R^{n}$ from corrupted observation $\vy \in \R^{m}$, where $m$ is often less than $n$, leading to non-unique solutions. IPs are typically modeled as: \footnotesize
\begin{equation}\label{eq:IP}
    \vy = \bm{A}\vx + \vz,
\end{equation} \normalsize
where $\bm{A} \in \R^{m\times n}$ is a known forward model for linear IPs, and $\vz$ represents unknown noise in the observation. 

Machine learning algorithms show significantly better reconstructions than classical optimization methods. Most of the work in the field aims to promote reconstruction quality using different deep-learning models, such as transformers \cite{fabian2022humus} and diffusion models \cite{whang2022deblurring}. In particular, model-based architectures (MBAs) such as loop unrolling (LU) networks inspired by classical optimization methods, unroll the objective function into a sequence of trainable iterative steps with the forward model $\bm{A}$ involved in every iteration. MBAs are considered to be more interpretable than neural networks that learn a direct inverse without using $\bm{A}$ \cite{monga2021algorithm}, and achieve state-of-the-art reconstructions in different fields, such as natural image restoration \cite{gilton2021deep} and medical image reconstruction \cite{bian2024brief}. 

While reconstruction quality is crucial, robustness and generalization accuracy are equally important for IPs. Robustness refers to the model's ability to handle unwanted noise, while generalization accuracy measures how well the model performs on unseen data. For example, in medical image reconstruction, a model must generalize well to detect subtle details like small bone fractures and be robust enough to eliminate artifacts caused by measurement noise. Despite their importance, robustness and generalization have not been thoroughly explored in the context of IPs, particularly in MBAs. While some studies provide empirical assessments of robustness \cite{gilton2021deep, genzel2022solving}, there is a noticeable lack of theoretical analysis \cite{gottschling2020troublesome}.

In this work, we model MBA training as a bilevel optimization problem without making specific assumptions about neural network architectures. We theoretically analyze its generalization and robustness risks in solving denoising problems and demonstrate that standard mean squared error (MSE) training may perform poorly in both metrics.

Technologies like adversarial training (AT) \cite{fawzi2016robustness, ford2019adversarial, kannan2018adversarial} and input noise injection \cite{fawzi2016robustness, ford2019adversarial, kannan2018adversarial} are commonly used to enhance robustness, while approaches such as weight perturbation \cite{wu2020adversarial}, stochastic training \cite{hardt2016trainfastergeneralizebetter} and sharpness-aware minimization \cite{foret2020sharpness} are designed to improve generalization. However, there is often a tradeoff between robustness and accuracy  \cite{tsipras2018robustness, zhang2023adversarial, gottschling2020troublesome, krainovic2023learning}. Strategies like careful loss function design \cite{pang2022robustness, kannan2018adversarial, zhang2019theoretically, zheng2021regularizing}, data augmentation \cite{zhang2017mixup}, and robust architectural choices such as dropout \cite{yang2020closer} and teacher-student models \cite{teacher} have been explored to mitigate this tradeoff in classification tasks, but the challenge of balancing accuracy and robustness in IP solvers still remains an open question \cite{gottschling2020troublesome}. Recent work by \citet{krainovic2023learning} introduces jittering in $\vy$ for black-box IP neural networks, achieving a comparable level of worst-case robustness to AT but with significant accuracy loss.


To bridge this gap, this work aims to improve robustness through a novel training scheme for MBAs, while maintaining generalization accuracy for IPs. Note that while specially designed architectures can also improve robustness or generalization, they are not discussed here as this work focuses on investigating different training schemes.

\paragraph{Our Contributions} In this work, we propose a stochastic gradient descent (SGD) jittering scheme in training MBAs. This method unrolls the objective function for solving IPs using a gradient descent algorithm, with small, random noises added to the gradient updates at each iteration to improve the training process. During inference, these jittering noises are removed, allowing the model to perform as it would in standard evaluation settings. This technique has demonstrated greater robustness and better generalization than traditional MSE training, while also being more time-efficient than AT. Our contribution can be summarized as follows,
\begin{itemize} \itemsep0em 
    \item \textit{Network Assumptions}: We investigate the performance of MBAs with general non-convex neural networks, where training often results in non-unique parameters. This analysis is crucial for demonstrating improvements across various metrics, particularly in robustness and generalization accuracy.
    \item \textit{Robustness}: We theoretically prove that SGD jittering implicitly promotes average-case robustness than regular MSE training. In the experiments, we further demonstrate the effectiveness of SGD jittering under worst-case attacks.
    \item \textit{Generalization}: To the best of our knowledge, this is the first work to formally analyze generalization accuracy in inverse problems under small perturbations in the test data, by reformulating the generalization risk specific to inverse problems. This goes beyond the commonly observed heuristic that a flatter loss landscape implies better generalization.
\end{itemize}

\section{Related Work} \label{sec:related_works}
\paragraph{Robustness and Generalization for IPs}
Several strategies have been proposed to improve robustness in solving inverse problems, including training-time exposure to diverse perturbations such as noise injection and data augmentation \cite{krainovic2023learning, zhou2019toward}, and adversarial training \cite{calivá2021adversarialrobusttrainingdeep}, enabling models to better handle noise during inference. Other approaches focus on architectural innovations—for example, diffusion models have shown inherent robustness to noise in MRI reconstruction \cite{gungor2023adaptive}, while PINN \cite{peng2022robust} embed domain knowledge to enhance stability without compromising interpretability. To improve generalization in IPs, methods like data augmentation with synthetic perturbations \cite{guan2024solving} and domain adaptation via techniques such as CycleGAN \cite{zhu2017unpaired} have been used to expand the training distribution and improve adaptability to unseen data. Additionally, incorporating geometric constraints, as in \cite{jiang2022improving} for electrocardiographic image reconstruction, has shown improved generalization by embedding prior knowledge into the learning process. While these approaches typically target either robustness or generalization, achieving both simultaneously remains an open and challenging problem.

\paragraph{Analysis of Robustness}
Smoothness is closely related to the robustness of neural networks. Works like \citet{salman2019provably, cohen2019certified, lecuyer2019certified, li2018second} show that regularization in smoothness promotes the robustness of classifiers. On the other hand, the analysis of IP is very different from classifications. \citet{krainovic2023learning} analyzes the worst-case robustness for IPs using a simple linear reconstruction model and directly solves for robustness risk. However, optimizing a convex model results in a solution that excels in only one metric (accuracy or robustness). In this work, we explore nonlinear MBAs in their general forms, demonstrating improvements across both metrics. 

\paragraph{Analysis of Generalization}
Demonstrating better generalization of classifiers often involves showing larger classification margins \cite{cao2019learning, kawaguchi2017generalization, noisyRNN}, but the proof for IPs is very different. A common but indirect approach shows an algorithm can regularize the Hessian of loss landscape with respect to the network parameters. Models that converge to flat minima in the loss landscape tend to generalize better in practice \cite{jiang2019fantastic, keskar2016large, neyshabur2017exploring, foret2020sharpness}, although the precise theoretical mechanism underlying this phenomenon remains unclear. Many studies have shown that regularizing the Hessian of the loss landscape \cite{xie2020diffusion, keskar2016large, foret2020sharpness} leads to flatter minima and have directly linked this to reduced generalization error, but this connection still lacks comprehensive theoretical justification.
In this work, rather than focusing on the connection to flat landscapes, we proposed a new generalization risk for IPs, and demonstrate that SGD jittering results in a smaller generalization risk than regular MSE training. 

\paragraph{Noise Injections}
Noise injection serves as an implicit regularization technique during training to enhance model robustness and generalization. When applied to input data, it can be viewed as a data augmentation strategy. This approach has been extensively studied in classification tasks \cite{fawzi2016robustness, ford2019adversarial, rusak2020simple} and has recently demonstrated effectiveness in improving robustness for IPs \cite{krainovic2023learning}. Layer-wise noise injection \cite{noisyRNN} in a recurrent network has been shown to generalize better for classifiers. Weight injection perturbs network parameters during training, which is shown to help escape from local minima \cite{zhu2018anisotropic, nguyen2019first} and promote smooth solutions \cite{orvieto2023explicit, liu2021noisy, orvieto2022anticorrelated}. Weights can also be perturbed before parameter updates to enhance generalization, as in stochastic gradient descent (SGD). We will explore SGD in detail in the next subsection. In a similar vein, \citet{renaud2024plug} adds noise to plug-and-play algorithms during evaluation to enhance in-distribution performance, though it does not address its impact on robustness and generalization accuracy.

\paragraph{Connection to SGD Training}
SGD training is effective in promoting generalization through implicit regularization \cite{smith2021originimplicitregularizationstochastic}. In large overparameterized models with non-unique global solutions, SGD favors networks with flat minima \cite{hochreiter1997flat}. The implicit regularization properties of SGD have been explored in contexts such as linear regression \cite{zou2021benign}, logistic regression \cite{ji2018risk}, and convolutional networks \cite{gunasekar2018implicit}. Unlike SGD training, which adds noise to network parameters, the proposed SGD jittering method introduces noise at each iterative update of the reconstructions in the lower-level optimization. Despite its name, SGD jittering is structurally similar to layer-wise noise injection within MBAs.



\section{Definition of Robustness and Generalization}
We formally define average-case robustness and generalization risks for a learned inverse mapping $H_\theta: \vy \mapsto \vx$ in IPs. Average-case robustness is also known as the ``natural" robustness for average performance \cite{rice2021robustness, hendrycks2019benchmarking}, which is equally important as in worst-case. During evaluation, a small vector $\vg$ sampled from the distribution $\mathcal{P}_g$ is added to a data point $\vx \in \mathcal{D}$. The new ground-truth becomes $\vx_g = \vx + \vg$, with the corresponding observation $\vy_g = \bm{A}\vx_g + \vz = \vy + \bm{A}\vg$.

\begin{definition}
    The \textbf{average-case robustness risk} of $H_\theta$ measures the distance between the unperturbed $\vx$ to the recovered signal $\hat{\vx}_g = H_\theta (\vy_g)$ with the presence of $\vg$, \vspace{-5pt}
    
    \footnotesize
    \begin{equation} \label{eq:avg_case_robustness}
        R_e(\theta) = \E_{\vx, \vy \sim \mathcal D, \vg \sim \mathcal{P}_g} ||\vx - H_\theta(\vy_g) ||^2_2.
    \end{equation} \normalsize
\end{definition}

\begin{definition}
    The \textbf{generalization risk} of $H_\theta$ measures the distance between the actual ground-truth $\vx_g$ to the recovered signal $\hat{\vx}_g = H_\theta (\vy_g)$ with the presence of $\vg$, \vspace{-5pt}
    
    \footnotesize
    \begin{equation} \label{eq:generalization}
        \mathcal{G}(\theta) = \E_{\vx, \vy \sim \mathcal D, \vg \sim \mathcal{P}_g} ||\vx_g - H_\theta(\vy_g) ||^2_2.
    \end{equation} \normalsize
\end{definition}
\vspace{-7pt}
Robustness measures how close the reconstruction is compared to the data within the dataset, and the generalization accuracy measures how well the reconstruction obeys the physics of the IP even for out-of-distribution (OOD) data. Defining generalization in regression is not straightforward, but for IPs, we can achieve this using the underlying forward models. Typically, a tradeoff between robustness and accuracy exists for classical feedforward neural networks \cite{zhang2019theoretically, yang2020closer, stutz2019disentangling}. We aim to illustrate that the SGD jittering technique enhances robustness and accuracy for MBAs without incorporating explicit regularization.


\section{Model-based Architecture Overview}
MBAs draw inspiration from classical optimization techniques, where the underlying signal $\vx$ is estimated via,
\footnotesize
\[\hat{\vx} = \argmin_\vx \frac{1}{2}||\vy - \bm{A}\vx||_2^2 + r(\vx),  \] \normalsize
where $r$ is an arbitrary regularization function to ensure the uniqueness of the reconstruction.
When $r$ is differentiable, the optimal solution can be solved via gradient descent (GD) algorithm, iterating through steps $k = 1,2,3...$ with a constant step-size $\eta$,\footnotesize
\begin{equation} \label{eq:GD_update}
    \vx_{k+1} = \vx_k - \eta \bm{A}^\top (\bm{A} \vx_k- \vy) - \eta \nabla r(\vx_k).
\end{equation}\normalsize
In a \textit{loop unrolling} (LU) architecture using GD, the gradient $\nabla r$ is replaced by a neural network, denoted by $f_\theta$. The process begins with an initial estimate $\vx_0$, which is iteratively updated using (\ref{eq:GD_update}) with a fixed number of iterations, denoted by $K$. The final output $\vx_K$ represents the predicted reconstruction and is compared to the desired ground-truth. The weights $\theta$ are then updated through end-to-end backpropagation. Different variants of LU are summarized in \citet{monga2021algorithm}.

\section{MBA Training Scheme Overview}
In this section, we summarize some common training schemes for general regression problems, adapted here for MBA training. Following this, we introduce the proposed SGD jittering method. 
\vspace{-7pt}
\paragraph{Regular MSE Training}
Mean-squared error (MSE) is the most commonly used training loss for regression and IPs. In particular, the goal of training a MBA is to minimize the following risk w.r.t. $\theta$, \footnotesize
\begin{equation} \label{eq:MSE_training}
\begin{aligned}
    R(\theta) &= \E_{\vx,\vy \sim \mathcal{D}} ||\vx - \hat{\vx} ||^2_2 \quad \textrm{ where, } \\
        \hat{\vx} &= \argmin_x~ \frac{1}{2} \| \vy  - \bm{A}\vx \|^2_2 + r_\theta(\vx)
\end{aligned}
\end{equation}\normalsize
Notice that MBAs do not directly learn $r$, but instead learn its gradient $\nabla r$ using a neural network $f_\theta$. We denote $r_\theta$ to highlight its dependency of $\theta$. The lower-level optimization is implemented using iterative steps, where each iteration corresponding to a single module in a MBA. For LU architecture, with a finite and fixed number of iterations $K$, MSE training minimizes the squared distance between the output $\vx_K$ and the ground-truth $\vx$. However, this training scheme can overfit to patterns in training data, potentially making the model more vulnerable to variations in $\vy$.
\vspace{-7pt}
\paragraph{Adversarial Training}
AT is widely recognized as one of the most effective methods for training robust neural networks \cite{wang2021convergence}. 
AT finds the worst-case attack for each training instance and adds it to the input data designed to deceive the model. By learning from these challenging examples, AT improves the model's robustness. For MBA training, AT aims to minimize the worst-case robustness risk as follows, \footnotesize
\begin{equation}\label{eq:worst_robustness}
        R_\epsilon(\theta) = \mathbb{E}_{\vx,\vy \sim \mathcal{D}} \left[ 
        \begin{aligned}
            &\max_{\ve:||\ve||_2 \leq \epsilon}  \quad  ||\vx - \hat{\vx} ||^2_2\\
            &\textrm{s.t. } \hat{\vx} = \argmin_x~ \frac{1}{2} \| \vy + \ve - \bm{A}\vx \|^2_2 + r_\theta(\vx)
        \end{aligned} \right].
\end{equation}\normalsize
Fast gradient sign method and projected gradient descent (ProjGD) are common methods to find the attack vector $\ve$, where \citet{athalye2018obfuscated} demonstrates that ProjGD adversarial attacks are more robust. However, AT is slow as it requires iterative solvers for the attack vector \cite{shafahi2019adversarial}.

AT might suffer from poor generalizations. Imagine in noiseless IPs where $\vy = \bm{A}\vx$, and assume the inverse $\bm{A}^{-1}$ exists. The goal of AT is to learn an inverse mapping $H_\theta$ such that $H_\theta(\vy + \ve)$ is close to $\vx$, for non-zero vector $\ve$. However, the true estimated is $\bm{A}^{-1} (\vy + \ve) = \vx + \bm{A}^{-1} \ve$. AT learns to ignore the latter term, thus introducing errors that can degrade generalization accuracy.

\vspace{-5pt}
\paragraph{Input Jittering}
Noise injection to the input is widely used in general ML tasks to promote robustness \cite{salman2019provably, cohen2019certified, lecuyer2019certified}. In IPs that aim to map from $\vy$ to $\vx$, \citet{krainovic2023learning} proposes adding a zero-mean random Gaussian vector $\vw $ to $\vy$ as input to a neural network that learns a direct reconstruction. They demonstrate that, with careful selection of the jittering variance, this training scheme can attain a comparable level of worst-case robustness to that achieved by AT. In MBAs, the input jittering approach minimizes, 
\begin{equation}\footnotesize
        \begin{aligned}
    J^{I}_{\sigma_w}(\theta) &= \E_{ \vw, (\vx,\vy) \sim \mathcal{D}} 
              ||\vx - \hat{\vx} ||^2_2 \quad \textrm{  where,} \\
             \hat{\vx} &= \argmin_x~ \frac{1}{2} \| \vy +\vw - \bm{A}\vx \|^2_2 + r_\theta(\vx).
         \end{aligned}
\end{equation}\normalsize
This method learns an inverse mapping from $\vy +\vw $ to $\vx$, which might suffer from generalization issues in the same way as AT. In fact, \citet{krainovic2023learning} demonstrates a significant accuracy loss using input jittering for black-box neural networks.

\section{Proposed Training Scheme}
Another line of work injects noises layer-wise into the neural network, where \citet{hodgkinson2021stochastic, jim1996analysis,liu2020does} model this layer-wise injection as a stochastic differential equation, demonstrating its role in implicit regularization for robustness. Additionally, \citet{noisyRNN} introduces noise into recurrent neural networks and shows improved generalization for classification tasks. 
In this work, we propose adding random zero-mean Gaussian noises, $\vw_k$, to the gradient updates $f_\theta(\vx_k)$, with the noise re-sampled independently for each iteration $k=1,2,3,...$. The noisy gradients are unbiased, i.e., $\E [f_\theta(\vx) + \vw_k] = \E [f_\theta(\vx)]$ for all $\vx$. Let $\Bar{\vw} = \{\vw_1,...,\vw_K \}$. The estimated reconstruction is solved iteratively using this noisy gradient descent, which is essentially stochastic gradient descent (SGD). Thus, with step size $\eta$, the learning objective to minimize the following risk, 
\begin{equation} \label{eq:sgd_jitter} \footnotesize
\begin{aligned}
    J^{SGD}_{\sigma_{w_k}}(\theta) &= \E_{ \Bar{\vw}, (\vx,\vy) \sim \mathcal{D}}
              ||\vx - \hat{\vx} ||^2_2 \quad \textrm{ where,} \\
             \hat{\vx} &= \lim_{k \rightarrow \infty} \vx_k - \eta \big(\bm{A}^\top (\bm{A}\vx_k - \vy) + f_\theta(\vx_k) + \vw_k \big).
\end{aligned}
\end{equation} \normalsize
The jittering noise satisfy the following assumption.
\begin{assumption}\label{ass:sgd_noise} 
    (SGD noises)
    Assume that at each gradient descent iteration $k=1,2,3,...$, noise $\vw_k \sim \mathcal{N}(0, \sigma_{w_k}^2/n\mathbf{I})$ is independently sampled from an i.i.d. zero-mean Gaussian distribution, where $\E ||\vw_k||_2^2 = \sigma_{w_k}^2$.
\end{assumption} 
In AT and input jittering training, the noise vectors are added to $\vy$ throughout the entire inversion, introducing a bias to the reconstruction compared to the actual ground-truth. In contrast, the lower-level objective for SGD jittering is the same as in regular MSE training. Extensive research has explored the convergence properties of SGD in solving nonconvex optimization problems \cite{ghadimi2013stochastic, li2024convex, madden2024high, lei2019stochastic}, which ensures a correct inverse mapping even with the presence of noise, thus preserves high accuracy in reconstruction. 

We adapted the SGD convergence result in \cite{garrigos2023handbook} for SGD jittering as follows, where the proof of Corollary \ref{corollary:sgd_convergence} can be found in the Appendix.

\begin{corollary} \label{corollary:sgd_convergence}
    (Convergence of MBAs-SGD for IPs, modified from Theorem 5.12 in \cite{garrigos2023handbook}) 
    Let $F(\vx) = \frac{1}{2} ||\vy - \bm{A} \vx||_2^2 + r_\theta(\vx)$ denote the lower-level objective function in (\ref{eq:sgd_jitter}). 
    Assume $f_\theta = \nabla r_\theta$ is $L$-Lipschitz continuous, satisfying 
    \begin{equation*}\footnotesize
        ||f_\theta(\vx + \Delta) - f_\theta(\vx)||_2 \leq L||\Delta||_2, \quad \forall \vx, \Delta,
    \end{equation*}\normalsize
     Consider a sequence $\{\vx_k^{sgd} \}_{k=0}^K$ generated by SGD in (\ref{eq:sgd_jitter}) with a constant step-size $\eta = \sqrt{\frac{2}{LL_{\max}K}}$, for $K\geq 1$, the following holds: \vspace{-8pt} \footnotesize
    \begin{equation*}
        \min_{0 \leq k < K}\E||\nabla F(\vx_k^{sgd}) ||^2_2 \leq \sqrt{\frac{2LL_{\max}}{K}} \big( 2(F(\vx_0^{sgd}) - \inf F) + \Delta_F^*)\big),
    \end{equation*}\normalsize
    where, $\vx^*=\argmin_x F(\vx)$, and \vspace{-5pt} \footnotesize
    \begin{equation*}
        \Delta_F^* = F(\vx^*) - \inf \big(\frac{1}{2}||\vy - \bm{A}\vx||_2^2\big) - \inf r_\theta(\vx),
    \end{equation*}\normalsize 
    
\vspace{-12pt}
    denote $\lambda_i$ as the eigenvalues of $\bm{A}^\top \bm{A}$ and $L_{\max} = \max \{L, \max_{i=1,..,n} \lambda_i\}$.
\end{corollary}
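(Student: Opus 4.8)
The plan is to derive the stated bound as a direct specialization of the generic SGD convergence theorem (Theorem 5.12 in \cite{garrigos2023handbook}) to the composite objective $F(\vx) = \frac12\|\vy - \bm{A}\vx\|_2^2 + r_\theta(\vx)$, with the SGD noise model of Assumption~\ref{ass:sgd_noise}. The key observation is that the SGD iteration in (\ref{eq:sgd_jitter}) uses the stochastic gradient $g_k = \nabla F(\vx_k) + \vw_k = \bm{A}^\top(\bm{A}\vx_k - \vy) + f_\theta(\vx_k) + \vw_k$, which is unbiased (since $\E[\vw_k]=0$) and has bounded variance $\E\|\vw_k\|_2^2 = \sigma_{w_k}^2$. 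So I first need to verify that $F$ meets the two hypotheses the reference theorem requires: (i) $\nabla F$ is Lipschitz with some constant $L_{\max}$, and (ii) the stochastic gradient oracle satisfies an ``expected smoothness'' or bounded-variance-at-optimum condition with the constant $\Delta_F^*$ appearing in the claim.

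For step (i): $\nabla F(\vx) = \bm{A}^\top(\bm{A}\vx - \vy) + f_\theta(\vx)$, so for any $\vx,\vx'$,
\begin{equation*}
\|\nabla F(\vx) - \nabla F(\vx')\|_2 \le \|\bm{A}^\top\bm{A}(\vx-\vx')\|_2 + \|f_\theta(\vx)-f_\theta(\vx')\|_2 \le (\lambda_{\max} + L)\|\vx-\vx'\|_2,
\end{equation*}
but the sharper route (which is what yields $L_{\max} = \max\{L, \max_i \lambda_i\}$ rather than their sum) is to treat the quadratic term and $r_\theta$ as the two ``summands'' of a finite-sum/expectation structure and invoke the per-component smoothness constants: the quadratic $\frac12\|\vy-\bm{A}\vx\|_2^2$ has Hessian $\bm{A}^\top\bm{A}$ with top eigenvalue $\max_i\lambda_i$, and $r_\theta$ has $L$-Lipschitz gradient by hypothesis, so each is $L_{\max}$-smooth and hence so is $F$. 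For step (ii): the reference theorem's noise term, after specializing, is controlled by the variance of the stochastic gradient; the precise quantity $\Delta_F^* = F(\vx^*) - \inf(\frac12\|\vy-\bm{A}\vx\|_2^2) - \inf r_\theta(\vx)$ is exactly the ``$\sigma^2$ at the minimizer'' that Theorem 5.12 uses when the oracle is a sum of individual functions each with its own minimizer — here the two ``functions'' are the data-fidelity term and $r_\theta$, and $\Delta_F^*$ is the nonnegative gap between the value of the sum at $\vx^*$ and the sum of the individual infima. I would state this identification carefully, matching notation to the handbook, and note that $\Delta_F^* \ge 0$.

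With both hypotheses in place, plugging $\eta = \sqrt{2/(L L_{\max} K)}$ into the conclusion of Theorem 5.12 — which has the generic form $\min_{0\le k<K}\E\|\nabla F(\vx_k)\|_2^2 \le \frac{2(F(\vx_0)-\inf F)}{\eta K} + (\text{const})\cdot\eta L_{\max}\cdot(\text{noise})$ — and simplifying the arithmetic gives the $\sqrt{2 L L_{\max}/K}$ prefactor multiplying $2(F(\vx_0^{sgd})-\inf F) + \Delta_F^*$. I would lay out the substitution explicitly: the first term becomes $\sqrt{L L_{\max}/(2K)}\cdot 2(F(\vx_0^{sgd})-\inf F)$ and the step-size-dependent noise term collapses to $\sqrt{L L_{\max}/(2K)}\cdot\Delta_F^*$ (up to the constant bookkeeping that the handbook's statement fixes), and combine them under a common $\sqrt{2 L L_{\max}/K}$ factor.

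The main obstacle I anticipate is not any hard inequality but the careful matching of the composite/finite-sum structure of $F$ to the exact form in which Theorem 5.12 of \cite{garrigos2023handbook} is stated — in particular, confirming that the theorem's ``expected smoothness'' hypothesis (or bounded-variance hypothesis, depending on the edition's phrasing) is genuinely met by the two-term decomposition here, and that the residual constant is exactly $\Delta_F^*$ as written rather than some rescaling of it. A secondary subtlety is that SGD jittering as written in (\ref{eq:sgd_jitter}) takes a limit $k\to\infty$, whereas the corollary talks about a finite horizon $K$; I would clarify that the convergence statement concerns the finite-$K$ iterates $\{\vx_k^{sgd}\}_{k=0}^K$ and that this is the object actually realized by a $K$-layer unrolled network, so the limit in (\ref{eq:sgd_jitter}) is a modeling idealization while the corollary controls the practical truncation.
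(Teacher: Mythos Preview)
Your proposal is correct and follows the same approach as the paper's proof: verify that $\nabla F$ is Lipschitz (the paper simply shows the sum bound $L+\mu$ via the triangle inequality, while you additionally discuss the per-summand route that would justify the $L_{\max}=\max\{L,\max_i\lambda_i\}$ appearing in the statement) and then invoke Theorem~5.12 of \cite{garrigos2023handbook} directly. Your handling of the $\Delta_F^*$ identification and the step-size substitution is in fact more detailed than the paper's, which writes out only the Lipschitz-gradient check and asserts the rest.
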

\begin{figure*}[t]
    \centering
    \includegraphics[width=0.85\textwidth]{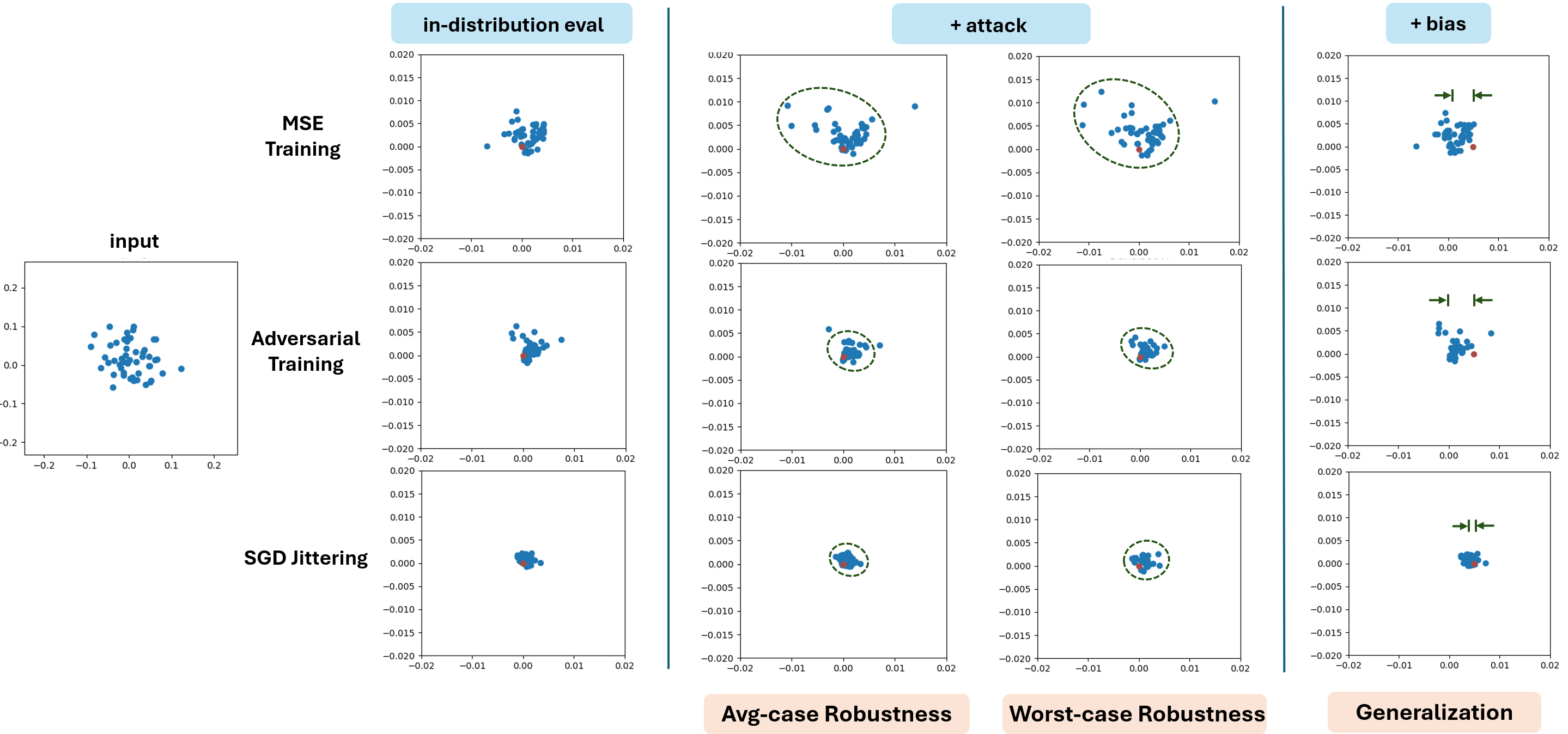}
    \caption{2D point denoising example using MSE training, AT, and SGD jittering. Each method is evaluated using regular in-distribution data, under average- and worst-case attacks to $\vy$ and with additional bias (from left to right columns respectively). Blue dots represents each test data instance, and the red dots indicate the ground-truth. The dashed circle highlights the cluster under attack, and the arrows represent the distance from the center of the data cluster to the biased ground truth. }
    \label{fig:1D_result}
\end{figure*}
On the other hand, SGD jittering noise also implicitly enhances robustness. While we will analyze this relationship theoretically in the following section, intuitively, exposing a neural network to noisier inputs during training encourages it to converge to smoother regions of the loss landscape, which are less sensitive to input variations. This results in more stable outputs that are less affected by perturbations during evaluation.

\section{Analysis for Denoising Problem}
In this section, we analyze the generalization and robustness of SGD jittering in addressing the denoising problem. We begin by outlining the relevant assumptions.
\begin{assumption}\label{ass:denoising} 
    (Denoising)
    The forward model in denoising is identity mapping, so that $\vy = \vx + \vz$. Signals $\vx \in \R^n$ are sampled from distribution $\mathcal{X}$, and noises are sampled i.i.d. from $\vz \sim \mathcal{N}(0, \sigma_z^2/n\bm{I})$.
\end{assumption}
\begin{assumption} \label{ass:nn}
    (Neural Network) 
    Assume the neural network architecture is twice differentiable with respect to the input, so the Hessian exists.
\end{assumption}
\begin{definition}
    Let $\theta^{mse}$, $\theta^{at}$ and $\theta^{sgd}$ denote the optimal neural network parameters of $f$ obtained from regular MSE training in (\ref{eq:MSE_training}), AT in (\ref{eq:worst_robustness}) and SGD jittering training in (\ref{eq:sgd_jitter}), respectively.
\end{definition}

\vspace{-5pt}
\subsection{Main Theoretical Results}
Given the assumptions, we state the first theorem regarding generalization accuracy. Let $H_\theta$ denote the inverse process learned using MBAs for the rest of the analysis.
\begin{theorem} \label{thm:generalization}
    Assuming a small zero-meam Gaussian random vector $\vg$ with $\E||\vg||^2_2 = \sigma_g^2$ is added to data $\vx \in \mathcal{D}$ during evaluation. The generalization risk for denoising problem is defined as, 
   \begin{equation} \label{eq:generalization_denoising}\footnotesize
        \mathcal{G}(\theta) = \E_{\vx, \vy \sim \mathcal D, \vg} ||\vx + \vg - H_\theta(\vy + \vg) ||^2_2.
    \end{equation} \normalsize
    Under assumptions \ref{ass:sgd_noise}, \ref{ass:denoising} and \ref{ass:nn}, SGD jittering generalizes better than regular MSE training, \footnotesize
   \[ \mathcal{G}(\theta^{sgd}) \leq  \mathcal{G}(\theta^{mse}).\] \normalsize
\end{theorem}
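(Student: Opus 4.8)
The plan is to express both generalization risks in terms of the same underlying quantities — the learned map $H_\theta$ evaluated at a perturbed input — and then exploit the fact that the MBA inverse map for denoising is, up to the network $f_\theta$, essentially a fixed-point / proximal-type operator. Concretely, for the denoising problem $\bm A = \bm I$, the lower-level fixed point $\hat{\vx} = H_\theta(\vy)$ satisfies $\hat{\vx} = \vy - f_\theta(\hat{\vx})$ (setting the gradient of $F$ to zero), so $H_\theta$ is determined implicitly by $f_\theta$. The first step is therefore to characterize $\theta^{mse}$ and $\theta^{sgd}$ via their first-order optimality conditions and to observe that SGD jittering's objective, by Assumption~\ref{ass:sgd_noise} and a second-order Taylor expansion in the re-sampled noise $\vw_k$ (legitimate by Assumption~\ref{ass:nn}), equals the MSE objective plus a nonnegative curvature term of order $\sum_k \sigma_{w_k}^2$ involving the Hessian of the loss composed through the unrolled iterations. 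This identifies SGD jittering as implicit Hessian regularization of the reconstruction map.

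**Next I would** connect this implicit regularization to the generalization risk in \eqref{eq:generalization_denoising}. Write $\mathcal{G}(\theta) = \E_{\vx,\vy,\vg}\,\|(\vx+\vg) - H_\theta(\vy+\vg)\|_2^2$ and Taylor-expand $H_\theta(\vy+\vg)$ around $\vy$ to second order in $\vg$: the zeroth-order term reproduces the MSE reconstruction error $\|\vx - H_\theta(\vy)\|_2^2$, the first-order term contributes a piece governed by $(\bm I - J_{H_\theta}(\vy))$ where $J_{H_\theta}$ is the Jacobian of the inverse map, and the second-order term brings in the Hessian of $H_\theta$. Using $\E\|\vg\|_2^2 = \sigma_g^2$ and independence of $\vg$, the cross term with the (zero-mean) residual should vanish in expectation, leaving $\mathcal{G}(\theta) \approx R(\theta) + \sigma_g^2\,\E\|\bm I - J_{H_\theta}(\vy)\|^2_{\text{(appropriate norm)}} + O(\sigma_g^4)$. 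The key algebraic identity is that, by the implicit function theorem applied to $\hat{\vx} = \vy - f_\theta(\hat{\vx})$, one gets $J_{H_\theta} = (\bm I + J_{f_\theta}(\hat{\vx}))^{-1}$, so $\bm I - J_{H_\theta} = (\bm I + J_{f_\theta})^{-1} J_{f_\theta}$, tying the generalization penalty directly to the Jacobian (hence Hessian of $r_\theta$) that SGD jittering controls.

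**The crux of the argument** is then to show that the implicit regularizer induced by SGD jittering dominates — or is aligned with — the generalization penalty $\sigma_g^2\,\E\|\bm I - J_{H_\theta}\|^2$, so that minimizing $J^{SGD}$ drives down exactly the extra term that distinguishes $\mathcal{G}$ from $R$. Since $\theta^{sgd}$ minimizes $J^{SGD} = R(\theta) + c\sum_k\sigma_{w_k}^2\,(\text{Hessian/Jacobian penalty}) + \dots$ while $\theta^{mse}$ minimizes only $R(\theta)$, and since both $R(\theta^{mse})$ and the penalty at $\theta^{sgd}$ can be controlled (the former because $\theta^{mse}$ is the unconstrained minimizer, the latter because it is being actively minimized), one compares $\mathcal{G}(\theta^{sgd})$ and $\mathcal{G}(\theta^{mse})$ term by term. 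In the overparameterized regime (non-unique minimizers, as emphasized in the contributions) one expects $R(\theta^{sgd}) = R(\theta^{mse}) \approx 0$ on the training manifold, so the comparison reduces purely to the regularization term, which is smaller for $\theta^{sgd}$ by construction — giving $\mathcal{G}(\theta^{sgd}) \le \mathcal{G}(\theta^{mse})$.

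**The main obstacle** I anticipate is making the Taylor-expansion arguments rigorous rather than heuristic: controlling the $O(\sigma_g^4)$ and $O(\sigma_{w_k}^4)$ remainders uniformly requires bounds on third derivatives of $f_\theta$ (Assumption~\ref{ass:nn} only gives the Hessian), and the interchange of $\lim_{k\to\infty}$ (the fixed-point limit in \eqref{eq:sgd_jitter}) with expectation and differentiation needs the contraction/convergence guarantees from Corollary~\ref{corollary:sgd_convergence}. A secondary subtlety is the "same minimizer" claim: one must argue that among all minimizers of $R$, the one selected by SGD jittering is the one with smallest curvature penalty, and that this selection is compatible with still achieving the MSE optimum — this is where the non-convexity and overparameterization assumptions do real work, and I would handle it by assuming (as the setup implicitly does) that the zero-loss set is nonempty and that $J^{SGD}$ restricted to it is minimized at $\theta^{sgd}$. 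I would structure the final write-up so that the clean inequality falls out of the chain $\mathcal{G}(\theta^{sgd}) = R(\theta^{sgd}) + \text{Pen}(\theta^{sgd}) \le R(\theta^{mse}) + \text{Pen}(\theta^{mse}) = \mathcal{G}(\theta^{mse})$, with each inequality justified separately.
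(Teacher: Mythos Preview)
Your route is genuinely different from the paper's, and while the high-level logic is plausible, there is a structural mismatch that would cause trouble.

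\textbf{What the paper actually does.} The paper never invokes the implicit function theorem or a Taylor expansion of the limit map $H_\theta$. Instead it works directly with the \emph{finite} $K$-step unrolled recursion for denoising, $\vx_{k+1} = (1-\eta)\vx_k + \eta\vy - \eta f_\theta(\vx_k)$, and expands two trajectories side by side: the clean one $\{\vx_k'\}$ started at $\vy$ and the perturbed one $\{\vx_k\}$ started at $\vy+\vg$. A short induction gives the exact identity $\vx_K = \vx_K' + \vg + \sum_{i=0}^{K-1}\eta(1-\eta)^{K-1-i}(f_\theta(\vx_i') - f_\theta(\vx_i))$, so $\mathcal G(\theta)$ decomposes \emph{exactly} into $\E\|\vx-\vx_K'\|^2$ plus a penalty and a cross term. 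The same recursive expansion is then applied to the SGD-jittered trajectory $\{\vx_k^{sgd}\}$, yielding an identical-looking penalty with $\vx_i^{sgd}$ in place of $\vx_i$. Matching the two is reduced to a choice of $\sigma_{w_k}^2$ so that the accumulated jitter variance equals $\sigma_g^2$. No remainder control is needed because nothing is truncated.

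\textbf{Where your approach diverges and where it may break.} Your fixed-point characterization $\hat\vx = \vy - f_\theta(\hat\vx)$ and the formula $J_{H_\theta} = (\bm I + J_{f_\theta})^{-1}$ treat $H_\theta$ as the infinite-iteration limit, but the training objective and the generalization risk in the paper are both stated for the $K$-step unrolled output. More importantly, per-iteration noise $\vw_k$ does \emph{not} enter the fixed-point equation at all---it vanishes in the limit---so the implicit-function route gives you no handle on how $\vw_k$ shapes the learned $f_\theta$. To recover the ``SGD jittering $=$ MSE $+$ curvature penalty'' identity you would have to unroll the iterations anyway, at which point you are doing the paper's computation. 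Your final chain $\mathcal G(\theta^{sgd}) \le \mathcal G(\theta^{mse})$ also requires that the implicit penalty from jittering coincides (up to a constant) with the generalization penalty; the paper achieves this by the explicit variance-matching condition $\sum_{i=0}^k \eta^2(1-\eta)^{2(k-i)}\sigma_{w_i}^2 = \sigma_g^2$, which your Jacobian-level argument does not produce.

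\textbf{What each approach buys.} Your Taylor/IFT picture is cleaner conceptually and would generalize more readily beyond denoising (where $\bm A\ne\bm I$ breaks the paper's tidy $(1-\eta)$ recursion), but it is approximate and, as you note, needs third-derivative control that Assumption~\ref{ass:nn} does not supply. The paper's iterative expansion is exact and elementary, sidestepping all remainder issues, at the cost of being tied to the specific denoising recursion and to a somewhat heuristic final step (``minimizing the SGD risk also minimizes the penalty term'').
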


The detailed proof is provided in Appendix \ref{sec:appendix_generalization_proof}. SGD jittering can be interpreted as a noisy version of the regular MSE training. The idea is to decompose the generalization accuracy in the presence of $\vg$ into two terms: the MSE term corresponding to the reconstruction from the clean trajectory $H_\theta(\vy)$, and a penalty term depends on the noisy trajectory $H_\theta(\vy + \vg)$. Then we show that the training loss of SGD jittering includes an implicit regularization term that penalizes deviations in the intermediate reconstructions caused by the perturbation. Minimizing the SGD jittering training loss also minimizes the penalty term in $\mathcal{G}(\theta)$, leading to better generalization accuracy compared to regular MSE training.

Furthermore, we rewrite the regularization term derived in the proof as follows in SGD training via Taylor expansion. 

\footnotesize
\begin{equation} \label{eq:regularization}
    \text{regularization} = \E ~|| \sum_{i=0}^{K-1} \eta (1-\eta)^{K-1-i} (f_\theta(\vx_i') - f_\theta(\vx_i^{sgd})) ||_2^2.
\end{equation}
\normalsize 

Let $\{\vx_k^{sgd}\}_{k=0}^{K}$ and $\{\vx_k' \}_{k=0}^{K}$ denote the reconstruction trajectory from SGD and GD, respectively. As derived through iterative expansions in the proof, at iteration $k$, we have $\vx_{k}^{sgd} = \vx_{k}' + \bm{\delta}_k$, where $\bm{\delta}_{k} = \sum_{i=0}^{k-1} \eta (1-\eta)^{k-1-i} (f_\theta(\vx_i') - f_\theta(\vx_i^{sgd})) + \sum_{i=0}^{k}\eta(1-\eta)^{k-i}\vw_i$.
Assumption \ref{ass:nn} ensures the existence of the first and second derivatives of $f_\theta$. Let $f_\theta^i:\R^n \rightarrow \R$ be the function defined by $f^i_\theta(\vx) = [f_\theta(\vx)]_i$, representing the $i^{th}$ component of $f_\theta(\vx)$, for $i \in \{1,...,n\}$. Then, we have, \vspace{-5pt}

\footnotesize
\begin{equation*}
    f^i_\theta(\vx_{k}^{sgd}) \approx f^i_\theta(\vx_{k}') + \bm{\delta}_{k}^\top \nabla f_\theta^i(\vx_{k}') + \frac{1}{2} \bm{\delta}_{k}^\top \bm{H}_f^i(\vx_{k}') \bm{\delta}_{k},
\end{equation*} \normalsize 

where, $\nabla f_\theta^i$ and $\bm{H}_f^i$ denote the gradient and the Hessian of $f_\theta^i$.
As $k$ increases, the regularization weight $\eta(1-\eta)^{K-1-i}$ in (\ref{eq:regularization}) becomes more significant. Thus minimizing the SGD jittering risk implicitly promotes smoother and wider landscapes of the lower-level objective $F$ with respect to the iterative inputs $\vx_k'$ for larger values of $K$. It is important to note that this differs from the concept of flat minima in a general feedforward neural network, which flatness refers to regions of low training loss with respect to the network parameters, often associated with improved generalization.
Our findings align with the observations in \cite{noisyRNN}, which demonstrate that layer-wise noise injections to recurrent neural networks promotes a smaller Hessian with respect to the hidden inputs to the final layer. While it is widely observed that flat minima in the loss landscape (with respect to network parameters) imply better generalization as summarized in \cref{sec:related_works}, \citet{noisyRNN} does not clarify why flat minima with respect to hidden-layer inputs improve generalization in regression settings. Our work fills this gap in the context of IPs. 

Furthermore, we present the robustness analysis results.
    \begin{theorem} \label{thm:robustness}
        Let $\ve$ be the perturbation vector sampled i.i.d. from the zero-mean Gaussian distribution, $\mathcal{P}_e$, such that $\E||\ve||_2^2 = \sigma_\ve^2$. For an inverse mapping $H_\theta$, the average-case robustness risk for denoising is as follows,
        \begin{equation}\label{eq:avg_robustness_denoising} \footnotesize
            R_{e}(\theta) = \mathbb{E}_{(\vx,\vy) \sim \mathcal{D}, \ve \sim \mathcal{P}_e} ~\left[ 
                  ||\vx - H_\theta(\vy + \ve) ||^2_2 \right].
        \end{equation} \normalsize
        SGD jittering is more robust than MSE training against bounded-variance perturbations around the measurements, \footnotesize
        \[R_{e}(\theta^{sgd}) \leq R_{e}(\theta^{mse}).\]\normalsize
    \end{theorem}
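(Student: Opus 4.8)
The plan is to mirror the structure used for Theorem~\ref{thm:generalization} (the generalization result), since the two statements differ only in whether the reference point inside the norm is $\vx$ or $\vx_g=\vx+\vg$, and here we compare against $\vx$ with the measurement perturbation $\ve$ added to $\vy$. First I would write the average-case robustness risk $R_e(\theta)=\E\,\|\vx-H_\theta(\vy+\ve)\|_2^2$ for denoising and split $H_\theta(\vy+\ve)$ into the clean trajectory output $H_\theta(\vy)$ plus the accumulated deviation caused by feeding $\vy+\ve$ through the unrolled GD iteration. Using the same iterative-expansion bookkeeping as in the proof of Theorem~\ref{thm:generalization}, the perturbed trajectory satisfies $\vx_k^{\text{pert}}=\vx_k'+\bm{\delta}_k^{e}$ where $\bm{\delta}_k^e$ collects the $(1-\eta)$-discounted sums of $\ve$-terms and of $f_\theta$-differences, exactly paralleling the $\bm{\delta}_k$ defined after~\eqref{eq:regularization} but with the jittering noise $\vw_i$ replaced by the data-space image of $\ve$ (here identity, since $\bm A=\bm I$). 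Then I expand $R_e(\theta)=\E\,\|\vx-H_\theta(\vy)\|_2^2 + (\text{cross term}) + \E\,\|H_\theta(\vy)-H_\theta(\vy+\ve)\|_2^2$; the first term is exactly the MSE training risk $R(\theta)$ evaluated at the clean trajectory, and I would argue the cross term vanishes (or is controlled) using that $\ve$ is zero-mean and independent of $(\vx,\vy)$, together with Assumption~\ref{ass:nn} so the relevant Taylor remainders are well-defined.

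Next I would identify the ``penalty'' term $\E\,\|H_\theta(\vy)-H_\theta(\vy+\ve)\|_2^2$ with a regularizer of the same form as~\eqref{eq:regularization}, namely $\E\,\|\sum_{i=0}^{K-1}\eta(1-\eta)^{K-1-i}(f_\theta(\vx_i')-f_\theta(\vx_i^{\text{pert}}))+\sum_{i=0}^{K}\eta(1-\eta)^{K-i}\ve\|_2^2$. The key observation is that, because $\ve\sim\mathcal{P}_e$ with $\E\|\ve\|_2^2=\sigma_\ve^2$ matches the bounded-variance structure of the jittering noise $\Bar{\vw}$ in Assumption~\ref{ass:sgd_noise} (both i.i.d., zero-mean, isotropic), the expectation over $\ve$ of this perturbation penalty coincides with — or is upper-bounded by — the implicit regularization term that appears inside $J^{SGD}_{\sigma_{w_k}}(\theta)$, provided the perturbation variance is matched to (or dominated by) the jittering variance schedule $\{\sigma_{w_k}^2\}$. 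Concretely, one writes $J^{SGD}_{\sigma_{w_k}}(\theta)=R(\theta)+(\text{SGD regularizer})$ exactly as derived in the proof of Theorem~\ref{thm:generalization}, and observes $R_e(\theta)=R(\theta)+(\text{$\ve$-perturbation penalty})$ with the same leading structure.

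The conclusion then follows from optimality: $\theta^{sgd}$ minimizes $J^{SGD}_{\sigma_{w_k}}=R+(\text{SGD reg})$ while $\theta^{mse}$ minimizes $R$ alone, so $R(\theta^{sgd})+\text{reg}(\theta^{sgd})\le R(\theta^{mse})+\text{reg}(\theta^{mse})$; combining this with the identification of $R_e$ as $R$ plus a penalty that is bounded by $\text{reg}$ gives $R_e(\theta^{sgd})\le R_e(\theta^{mse})$ after also using that $\text{reg}(\theta^{mse})\ge 0$ and that at $\theta^{mse}$ the clean-trajectory MSE is already minimized. I would be careful to state the variance-matching hypothesis precisely (the result should hold for $\sigma_\ve^2$ no larger than the cumulative effective jittering variance), since that is what makes the $\ve$-penalty controllable by the SGD regularizer.

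The main obstacle I expect is handling the cross term and the nonlinearity of $H_\theta$ rigorously: unlike a linear model, $H_\theta(\vy+\ve)$ is not $H_\theta(\vy)+(\text{linear in }\ve)$, so the cross term $\E\langle \vx-H_\theta(\vy),\,H_\theta(\vy)-H_\theta(\vy+\ve)\rangle$ need not be exactly zero. I would address this the same way the generalization proof does — via the first/second-order Taylor expansion of each component $f_\theta^i$ around the clean iterate $\vx_k'$ (Assumption~\ref{ass:nn}), keeping the leading (quadratic-in-perturbation) contribution and absorbing higher-order terms, and using zero-mean isotropy of $\ve$ to kill the odd-order terms in expectation. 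A secondary subtlety is the interchange of the $k\to\infty$ limit (or the fixed-$K$ truncation) with expectations; Corollary~\ref{corollary:sgd_convergence} guarantees the trajectories are well-behaved, so I would invoke it to justify that the $(1-\eta)$-discounted series converge and the limiting $H_\theta$ is the relevant object, exactly as in the companion theorem.
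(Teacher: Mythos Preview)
Your overall strategy---expand the perturbed trajectory in terms of the clean one, decompose $R_e(\theta)$ into the clean MSE plus perturbation-dependent pieces, and then identify those pieces with the implicit regularizer sitting inside $J^{SGD}_{\sigma_{w_k}}$---is exactly what the paper does, and your variance-matching remark is on target (the paper's condition is $\sum_{i=0}^{k}\eta^2(1-\eta)^{2(k-i)}\sigma_{w_i}^2 = \sigma_\ve^2$, under which the $\ve$-perturbed trajectory is treated as a particular realization of the jittering trajectory).

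The one place you diverge, and where your plan would not close, is the cross term. You intend to argue it \emph{vanishes} via Taylor expansion plus zero-mean isotropy of $\ve$, killing odd-order contributions. But the second-order (Hessian) part of $f_\theta(\vx_i')-f_\theta(\vx_i^{at})$ is even in $\ve$ and survives the expectation, so the cross term $-2\,\E\big[\langle \vx-\vx_K',\,\sum_i \eta(1-\eta)^{K-1-i}(f_\theta(\vx_i')-f_\theta(\vx_i^{at}))\rangle\big]$ is generically nonzero at the same $O(\sigma_\ve^2)$ order as the quadratic penalty you retain. The paper does \emph{not} try to make this term vanish. Instead it keeps it and observes that the \emph{same} cross-term structure appears in the expansion of $J^{SGD}_{\sigma_{w_k}}$ (with $\vx_i^{sgd}$ in place of $\vx_i^{at}$); hence minimizing $J^{SGD}_{\sigma_{w_k}}$ drives down the quadratic penalty \emph{and} the cross term simultaneously, which is what yields $R_e(\theta^{sgd})\le R_e(\theta^{mse})$. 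The Taylor expansion you recall from the generalization argument is a separate, post-proof interpretation of the regularizer~\eqref{eq:regularization}, not the mechanism used to control the cross term. A minor bookkeeping point: the paper separates out the constant $\E\|\ve\|_2^2=\sigma_\ve^2$ explicitly (it is $\theta$-independent and drops from the comparison), whereas your grouping $\E\|H_\theta(\vy)-H_\theta(\vy+\ve)\|_2^2$ bundles it with the $f_\theta$-difference sum; either is fine, but separating makes the match to the SGD-jittering decomposition cleaner.
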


The proof is similar to the one in proving generalization, we include the full proof in the Appendix \ref{appendix:robustness_proof}. The implicit regularization induced by SGD jittering help characterize both generalization and robustness of MBAs.
\begin{figure*}[t]
    \centering
    \includegraphics[width=1\linewidth]{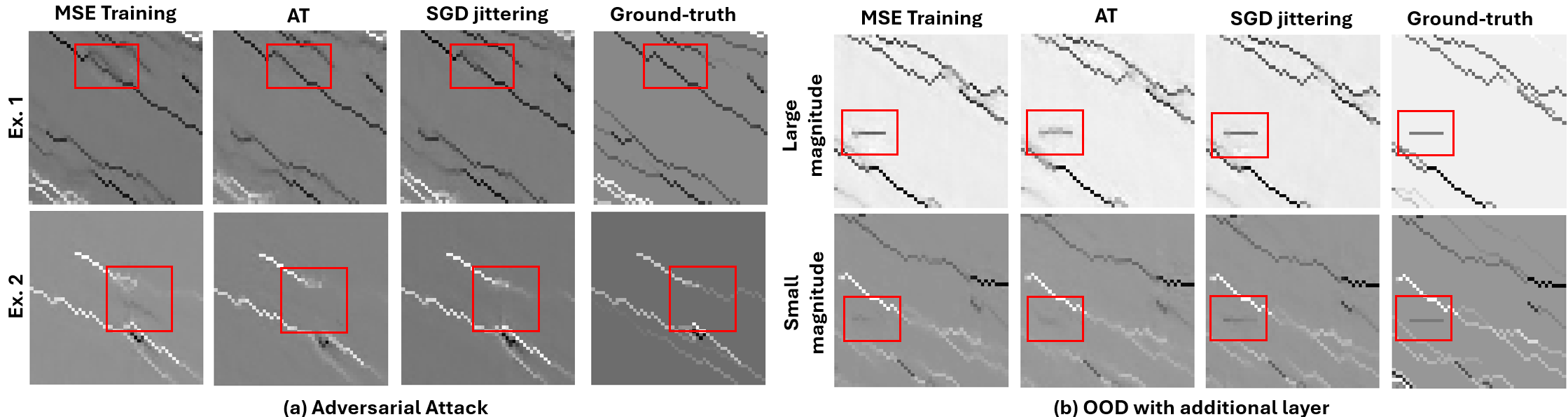}
    \caption{Visualization of seismic deconvolution using (stochastic) \textbf{gradient descent} variants within loop unrolling architectures. Figure (a) illustrates the reconstruction performance under adversarial attack with $\epsilon = 1$. Figure (b) evaluates the model's performance on OOD seismic data, where the ground-truth reflectivity includes an additional small horizontal layer (with magnitude of 0.3 and 0.1), highlighted in the red boxes. }
    \label{fig:sgd_seis_comb} \vspace{-10pt}
\end{figure*}

\section{Experiments and Discussions}
    In our experiments, we use a toy denoising example to validate our theorems and demonstrate the proposed training scheme using seismic deconvolution and magnetic resonance imaging reconstruction. For each task, we train the same LU architecture using different training schemes. We then evaluate the models using in-distribution (ID) testing data, adversarial attacks with various strengths, and task-dependent OOD data. Detailed training procedures are provided in the Appendix.

\paragraph{Toy Problem: 2-dimensional Point Denoising}
We begin with a 2D point denoising problem to validate our theorems. The task involves recovering the point $\vx = (0,0)^\top$ from noisy observations generated by adding zero-mean Gaussian noise with a variance of 0.01. We use 200 samples for training and 50 for evaluation. Figure \ref{fig:1D_result} visualizes the denoising results under ID data, average-case attack, worst-case attack, and a biased ground-truth. 
ID data are sampled from the same distribution as the training data, centered at $(0,0)^\top$. 
The worst-case attack finds a vector within a 0.01 $\ell_2$-distance around $\vy$ that maximizes the squared error, while the average-case attack uniformly samples attack vectors within the same ball and adds to $\vy$. 
Finally, we evaluate generalization accuracy by reconstructing the slightly biased ground truth $(0.005, 0)^\top$ from its corresponding measurements. 
The results show that both SGD jittering and AT produce tighter clusters of reconstructed points compared to MSE training under adversarial attacks. Moreover, when reconstructing a slightly biased ground truth, SGD jittering achieves the most accurate reconstruction, showcasing its better generalization capability to small variations in data. Notice that MSE training underperforms SGD jittering on ID data, likely because the noise injection in SGD jittering helps the model escape from local minima in training. 



\begin{table}[t]
\footnotesize
    \centering
    \begin{tabular}{cccc}
       PSNR / SSIM          & ID Test Data         & Adv. Attack           &  OOD Data      \\ \midrule
       MSE training         & \underline{34.64} / \underline{0.921}     & 28.85 / 0.829     & \underline{34.57} / \underline{0.918}    \\ \midrule
       AT                   & 33.50 / 0.903     & \textbf{30.27} / \textbf{0.849}     & {33.45} / {0.902}  \\ \midrule
       Input Jittering   & 32.92 / 0.882     & \underline{30.10} / 0.832   & 32.91 / 0.882  \\ \midrule
       \textbf{SGD jittering}        & \textbf{35.10} / \textbf{0.928}     & 29.89 / \underline{0.842}    & \textbf{34.93} / \textbf{0.927} 
    \end{tabular}
    \caption{\textbf{Seismic deconvolution} evaluation for in-distribution data (column 2), under adversarial attack with $\epsilon=1$ (column 3), and for OOD data with additional horizontal layer (columns 4). The best and second-best performances are in bold and underlined respectively.}
    \label{tab:seis_result}
    \normalsize
\end{table}

\vspace{-10pt}
\paragraph{Seismic Deconvolution}
Seismic deconvolution is a crucial problem in geophysics. It aims to reverse the convolution effects in the received signal, which occurs when artificial source waves travel through the Earth's layers. The goal is to extract sparse reflectivity series from recorded seismic data. The training data is generated following the same procedure as in \cite{8641282, guan2024solving}. To test the generalization accuracy, we generate OOD data by introducing an arbitrary horizontal layer with magnitudes of 0.1 to the ground truth, reflected accordingly in the measurements.
The numerical results in Table \ref{tab:seis_result} show that SGD jittering achieves the highest peak signal-to-noise ratio (PSNR) and structural similarity index (SSIM) on ID and OOD data, while maintaining competitive performance under adversarial attacks. In contrast, AT demonstrates the best robustness under adversarial attacks but suffers significant performance degradation for both ID and OOD data. 
Visual reconstructions in Figure \ref{fig:sgd_seis_comb} further highlight the differences between these methods. While MSE training often introduces artifacts, especially under adversarial attacks, AT fails to recover the small horizontal layer in OOD data with a magnitude of 0.1, likely perceiving it as noise. In comparison, SGD jittering accurately recovers the additional layer, demonstrating its ability to generalize to subtle features not present in the training data.

\begin{table}[t]
\footnotesize
    \centering
    \begin{tabular}{cccc}
       PSNR / SSIM          & fastMRI   & Adv. Attack &  Tumor Cell  \\ \midrule
       MSE training         & \underline{28.21} / \underline{0.603}     & 25.68 / 0.382     & 29.92 / \underline{0.779} \\ \midrule
       AT  & 27.68 / 0.564     & \textbf{27.17} / \underline{0.549}     & 27.74 / 0.597 \\ \midrule
       Input Jittering   & 28.18 / 0.595     & 25.05 / 0.420   & \underline{29.97} / 0.740 \\ \midrule
       \textbf{SGD jittering}        & \textbf{28.22} / \textbf{0.607}     & \underline{26.77} / \textbf{0.552}    & \textbf{30.36} / \textbf{0.788}
    \end{tabular}
    \caption{\textbf{MRI evaluation} for in-distribution data (column 2), adversarial attack (column 3), and OOD data (column 4). The best and second best performances are in bold and underlined respectively.}
    \label{tab:mri_result}
    \normalsize 
\end{table} \vspace{-10pt}

 \begin{figure*}[t]
    \centering
    \includegraphics[width=0.9\textwidth]{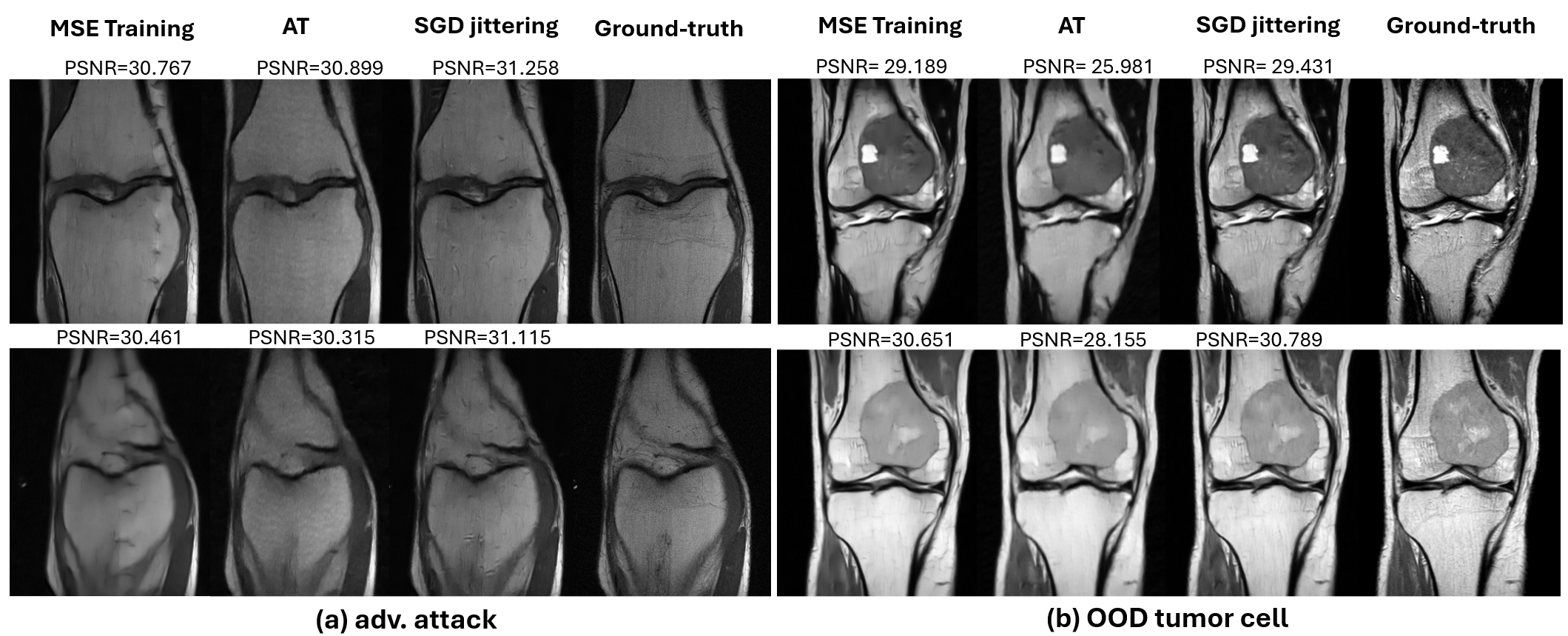}
    \caption{\textbf{Gradient descent} LU for MRI evaluation (a) under \textbf{adversarial attack} obtained from projected gradient descent with norm no more than $\epsilon = 1$. Each row represents an MRI reconstruction of a sample from fastMRI \cite{fastmri}. (b) MRI evaluation using \textbf{OOD} tumor knee data from \cite{tumor-mri}. }
    \label{fig:mri_at}
\end{figure*}

 \begin{figure}[h]
    \centering
    \includegraphics[width=1\linewidth]{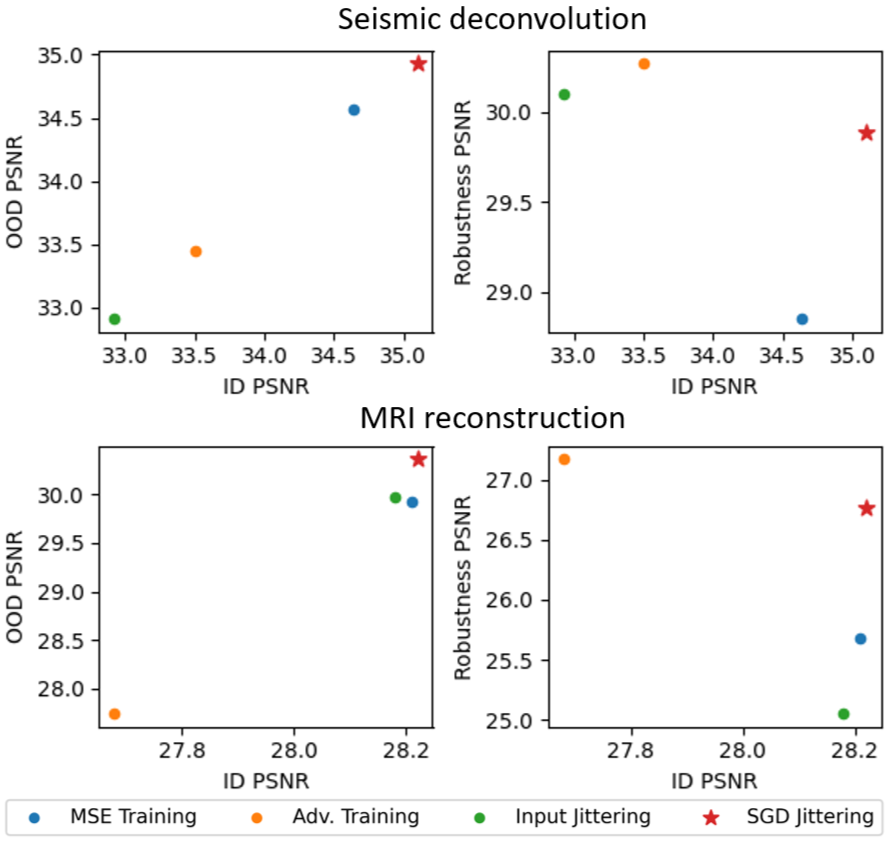}
    \caption{Comparing robustness and generalization of different training schemes for GD LU for seismic deconvolution problem (top row), and MRI reconstruction (bottom row).}
    \label{fig:sgd_rob_acc} \vspace{-20pt}
\end{figure}



\paragraph{Accelerated MRI Reconstruction}
Accelerated MRI aims to recover human-interpreted body structures from partial k-space measurements. We train models with single-coil knee MRI from the fastMRI dataset \cite{fastmri} with $4\times$ acceleration, or $1/4$ of the measurements in k-space is used for reconstruction. To assess generalization, we use a different knee dataset from \citet{tumor-mri}, which includes giant tumor cells absent from the training data. The results in Table \ref{tab:mri_result} show that, on ID data, MSE training and SGD jittering perform similarly, while AT underperforms due to its inherent tradeoff in resolution. On OOD tumor data, SGD jittering outperforms all other methods, demonstrating better generalization to unseen features. Visual comparisons in Figure \ref{fig:mri_at} further support these findings. Reconstructions produced by AT are smoother but often miss fine details, consistent with prior observations in black-box IP solvers \cite{krainovic2023learning}. In contrast, SGD jittering produces high-resolution reconstructions, preserving critical anatomical structures, particularly in OOD cases.


\subsection{Robustness vs. Generalization}
Figure \ref{fig:sgd_rob_acc} presents a comparison of various training schemes in terms of in-distribution accuracy, robustness and generalization measured by average test PSNR on seismic deconvolution (top row) and MRI reconstruction (bottom row) tasks. The left panels show that the proposed SGD jittering approach achieves better generalization to OOD data. Meanwhile, the right panels depict the trade-off between robustness and accuracy for baseline methods, where our approach provides a more effective balance between the two metrics.

\subsection{How to Choose SGD Noise Level?}
Figure \ref{fig:sweep} examines the impact of SGD jittering noise levels $\sigma_{wk}^2$ in the context of the denoising problem. The jittering noise level determines the robustness and in- and out-of-distribution accuracies. This parameter search allows the identification of an optimal $\sigma_{wk}^2$ that balances robustness against adversarial attacks while maintaining high accuracy.

 \begin{figure*}[t]
    \centering
    \includegraphics[width=0.9\textwidth]{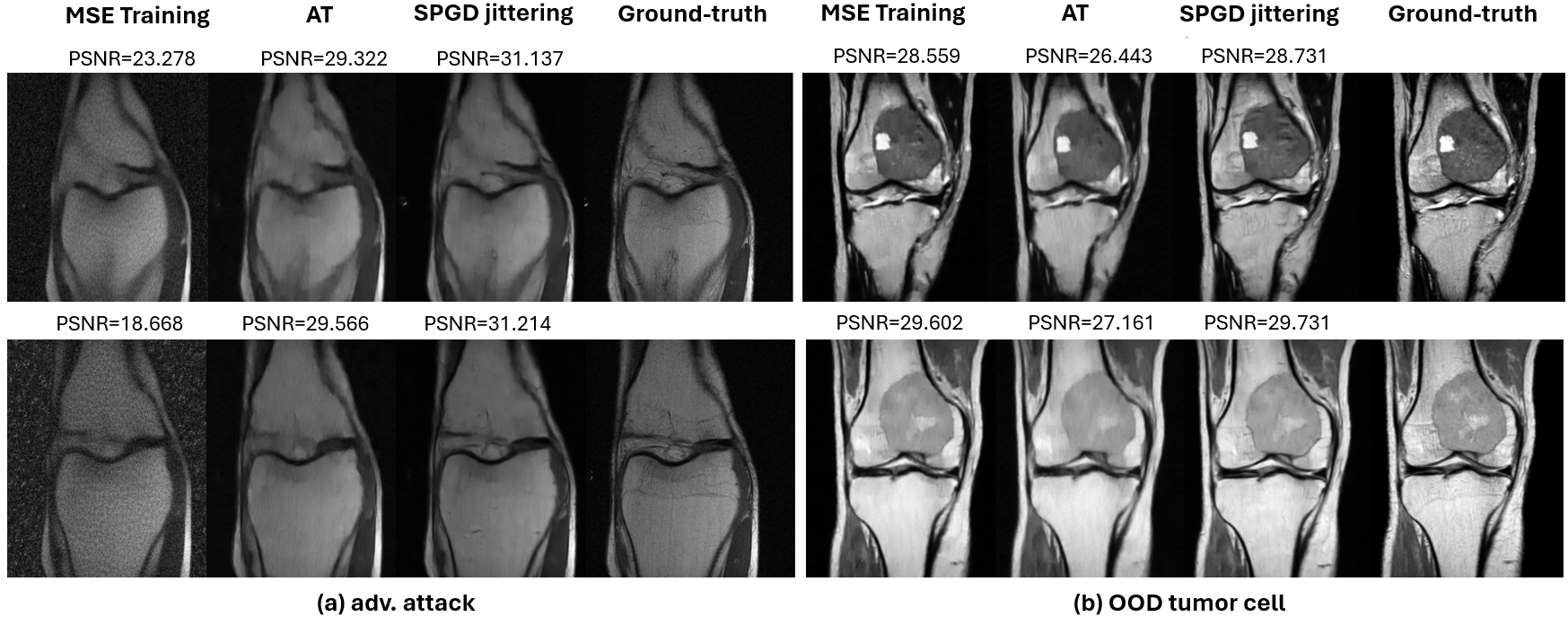}
    \caption{\textbf{Proximal gradient descent} LU for MRI evaluation (a) under \textbf{adversarial attack} obtained from projected gradient descent with norm no more than $\epsilon = 1$. Each row represents an MRI reconstruction of a sample from fastMRI \cite{fastmri}. (b) MRI evaluation using \textbf{OOD} tumor knee data from \cite{tumor-mri}. }
    \label{fig:prox_mri}
\end{figure*}

 \begin{figure}[t]
    \centering
    \includegraphics[width=0.85\linewidth]{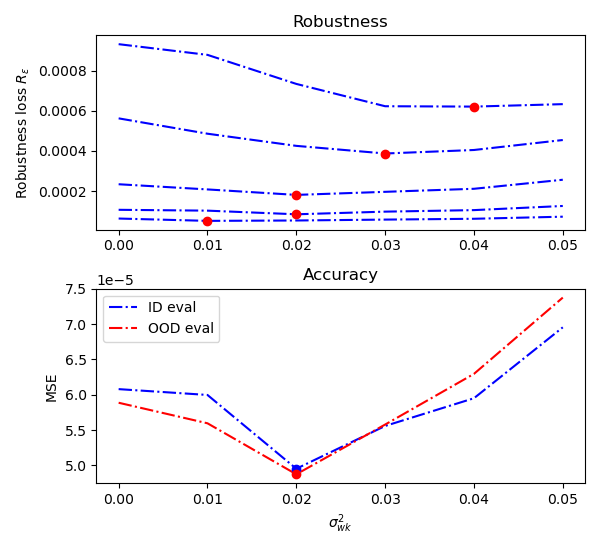}
    \caption{The effect of SGD jittering noise level $\sigma_{wk}^2$ on model performance for denoising problem. (Top): Worst-case robustness $R_\epsilon$ recorded for each noise level. Each line represents a different value of $\epsilon$, with the red dot indicating the $\sigma_{wk}^2$ that achieves the best robustness for each corresponding $\epsilon$. (Bottom): In-distribution (ID) and out-of-distribution (OOD) accuracies in MSE are recorded for each $\sigma_{wk}^2$, with best noise levels highlighted.}
    \label{fig:sweep} \vspace{-5pt}
\end{figure}

\vspace{-10pt}
\subsection{Stochastic Proximal Gradient Descent Jittering}
We further extend the proposed stochastic training idea to proximal gradient descent (PGD) variant of model-based architecture training, and denote the algorithm as SPGD jittering, where the learning objective is,
\footnotesize
\begin{equation} \label{eq:SPGD}
\begin{aligned}
    J^{SPGD}_{\sigma_{w_k}}(\theta) &= \E_{ \Bar{\vw}, (\vx,\vy) \sim \mathcal{D}}
              ||\vx - \hat{\vx} ||^2_2 \quad \textrm{ where,} \\
             \hat{\vx} &= \lim_{k \rightarrow \infty} prox_{\theta} \big( \vx_k - \eta \big(\bm{A}^\top (\bm{A}\vx_k - \vy) + \vw_k \big) \big).
\end{aligned}
\end{equation} \normalsize
Here, $prox$ represents a proximal operator, which is replaced by a neural network parameterized by $\theta$, and $\vw_k$ denotes the stochastic noises injected independently to the input of the proximal network at each iteration. 
Similar to SGD jittering, SPGD jittering ensures that the noise $\vw_k$ is re-sampled at every iteration, maintaining unbiased gradient updates while promoting variations in training data.

Figure \ref{fig:prox_mri} compares the same \textit{proximal gradient descent} framework with different training schemes for accelerated MRI reconstruction. MSE training denotes the classical proximal GD variant of LU trained with MSE loss, AT, and SPGD jittering for accelerated MRI reconstruction. MSE training demonstrates reasonable generalization to OOD tumor data but remains highly sensitive to adversarial attacks. In contrast, AT exhibits robustness against attacks but produces reconstructions with lower resolution, which can obscure fine details in critical medical imaging tasks. SPGD jittering effectively enhances both robustness and accuracy, achieving cleaner reconstructions under adversarial attacks and superior generalization to OOD tumor data.


Numerical evaluations on SPGD jittering for accelerated MRI and further experiments on seismic deconvolution are presented in Appendix \ref{sec:appendix_experiments}. We also compare the training speed of AT and the proposed stochastic noise injection techniques in Appendix \ref{appendix:train_speed}, demonstrating that the proposed method is a time-efficient strategy to promote robustness.

\section{Conclusion} \vspace{-5pt}
Robustness and generalization accuracy are both crucial in solving IPs, yet current analyses often focus on simple black-box solvers that enhance robustness at the cost of accuracy. While empirical studies have examined the robustness of MBAs, there is a lack of theoretical analysis. In this work, we introduce a novel and easily implemented SGD jittering training scheme to address the tradeoff between robustness and accuracy in solving IPs. Our method proves effective for high-quality model-based IP solvers. Through mathematical analysis, we demonstrate that this approach implicitly regularizes the gradient and Hessian of the neural network with respect to the input, resulting in improved generalization and enhanced robustness compared to traditional MSE training. Experimental results show that while MSE training is vulnerable to adversarial attacks and AT recovers smooth estimates, SGD and SPGD jittering consistently produces robust, high-quality outcomes.


\vspace{-3pt}
\section*{Software and Data} \vspace{-5pt}
The code for the proposed SGD and SPGD jittering methods are provided here: \url{https://github.com/InvProbs/SGD-jittering}.

\vspace{-8pt}
\section*{Acknowledgements} \vspace{-5pt}
This work was supported by the Center for Energy and Geo Processing (CeGP) at Georgia Institute of Technology.

\vspace{-8pt}
\section*{Impact Statement} \vspace{-5pt}
This work highlights the critical need for both robustness and generalization accuracy in solving inverse problems. Model-based architectures are high-performance and data-efficient machine learning architecture for IPs. The proposed SGD/SPGD jittering strategy mitigates the robustness-accuracy tradeoff for this specific architecture, demonstrating both efficiency and effectiveness across diverse applications. However, for inverse problems with unknown forward models where model-based architectures are unsuitable, future research may require alternative techniques to overcome the robustness-accuracy tradeoff.




\bibliography{reference}
\bibliographystyle{icml2025}

\newpage
\appendix
\onecolumn
\section{Notations}
We clarify the notations used in the paper.
\begin{itemize} \itemsep0em 
    \item $H_\theta$: learned inverse mapping parameterized by $\theta$, i.e., black-box neural networks, MBAs. For MBAs in particular, $H_\theta = \argmin_x~ \frac{1}{2} \| \vy  - \bm{A}\vx \|^2_2 + r_\theta(\vx)$.
    \item $r$: regularization function from predefined functions, i.e., $\ell_1$ and $\ell_2$ norm.
    \item $r_\theta$: regularization function whose gradient is learned from a neural network $f_\theta$.
    \item $f_\theta$: learned gradient update of $r$, $f_\theta(\vx) = \nabla_\vx r_\theta(\vx)$.
    \item $F(x) = \frac{1}{2} \| \vy  - \bm{A}\vx \|^2_2 + r_\theta(\vx)$, denote the value of the lower-level objective function.
\end{itemize}

\section{Proof of SGD Jittering Convergence (Corollary \ref{corollary:sgd_convergence})} \label{sec:appendix_corollary_convergence_proof}
    \begin{corollary} 
    (Restate Corollary \ref{corollary:sgd_convergence}. Convergence of MBAs-SGD for IPs) 
    Let $F(\vx) = \frac{1}{2} ||\vy - \bm{A} \vx||_2^2 + r_\theta(\vx)$ denote the lower-level objective function. 
    Assume $f_\theta = \nabla r_\theta$ is $L$-Lipschitz continuous, satisfying \footnotesize
    \begin{equation*}
        ||f_\theta(\vx + \Delta) - f_\theta(\vx)||_2 \leq L||\Delta||_2, \quad \forall \vx, \Delta,
    \end{equation*}\normalsize
    let $\mu = \max_{i=1,..,n} \lambda_i$, where $\lambda_i$ are the eigenvalues of $\bm{A}^\top \bm{A}$, and define $L_{max} = \max \{\mu, L\}$. Consider a sequence $\{\vx_k^{sgd} \}_{k=0}^K$ generated by SGD in (\ref{eq:sgd_jitter}) with a constant step-size $\eta = \sqrt{\frac{2}{LL_{max}K}}$. For $K\geq 1$, the following holds: \vspace{-5pt} \footnotesize
    \begin{equation*}
        \min_{0 \leq k < K}\E||\nabla F(\vx_k^{sgd}) ||^2_2 \leq \sqrt{\frac{2LL_{max}}{K}} \big( 2(F(\vx_0^{sgd}) - \inf F) + \Delta_F^*)\big),
    \end{equation*}\normalsize
    where, $\vx^*=\argmin_x F$, and \vspace{-5pt} \footnotesize
    \begin{equation*}
        \Delta_F^* = F(\vx^*) - \inf \big(\frac{1}{2}||\vy - \bm{A}\vx||_2^2\big) - \inf r_\theta(\vx).
    \end{equation*}\normalsize
    \end{corollary}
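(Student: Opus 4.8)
The plan is to recognize the SGD jittering iteration as classical stochastic gradient descent applied to the lower-level objective $F$, verify that $F$ has the structural properties required by Theorem~5.12 of \cite{garrigos2023handbook} (namely $L_{\max}$-smoothness and a suitable bound on the stochastic gradient noise), and then instantiate that theorem with the stated step size $\eta = \sqrt{2/(LL_{\max}K)}$.

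First I would identify the components. The stochastic gradient used at step $k$ is $\bm{A}^\top(\bm{A}\vx_k - \vy) + f_\theta(\vx_k) + \vw_k$, which equals $\nabla F(\vx_k) + \vw_k$ since $\nabla F(\vx) = \bm{A}^\top(\bm{A}\vx - \vy) + f_\theta(\vx)$ (using $f_\theta = \nabla r_\theta$). By Assumption~\ref{ass:sgd_noise} the noise $\vw_k$ is zero-mean, independent across iterations, and independent of $\vx_k$, so the stochastic gradient is unbiased, and its second moment splits as $\E\|\nabla F(\vx_k)+\vw_k\|_2^2 = \E\|\nabla F(\vx_k)\|_2^2 + \sigma_{w_k}^2$. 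Next I would establish smoothness: $\nabla F$ is the sum of the linear map $\vx \mapsto \bm{A}^\top\bm{A}\vx - \bm{A}^\top\vy$, which is Lipschitz with constant $\mu = \max_i \lambda_i$, and $f_\theta$, which is $L$-Lipschitz by hypothesis; hence $\nabla F$ is $(L+\mu)$-Lipschitz, and in particular $L_{\max} = \max\{L,\mu\}$-smoothness-type descent inequalities hold in the form used by the reference. The term $\Delta_F^*$ is exactly the ``noise at the optimum'' quantity appearing in the interpolation-free SGD bound: it measures the gap between $F(\vx^*)$ and the sum of the infima of the two summands of $F$, which upper-bounds the expected squared norm of the stochastic gradient evaluated at the minimizer after the smoothness reduction.

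Then I would simply quote the nonconvex SGD convergence statement (Theorem~5.12 of \cite{garrigos2023handbook}): for an $L_{\max}$-smooth $F$ bounded below, with unbiased stochastic gradients satisfying the ``expected smoothness'' bound governed by $\Delta_F^*$, the constant step size $\eta = \sqrt{2/(LL_{\max}K)}$ yields $\min_{0\le k<K}\E\|\nabla F(\vx_k^{sgd})\|_2^2 \le \sqrt{2LL_{\max}/K}\,\big(2(F(\vx_0^{sgd})-\inf F) + \Delta_F^*\big)$. Substituting our identifications of $F$, $L_{\max}$, and $\Delta_F^*$ gives the corollary verbatim.

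The main obstacle is not any deep inequality but the bookkeeping of matching our setting precisely to the hypotheses of Theorem~5.12 in \cite{garrigos2023handbook}: in particular confirming that the combined smoothness constant can be taken as $L_{\max}=\max\{L,\mu\}$ rather than $L+\mu$ (this is where the interpolation/ABC-type assumption and the exact constants in the reference matter), and checking that the particular variance-at-optimum constant in that theorem specializes to the displayed $\Delta_F^*$. I would therefore spend most of the write-up carefully translating notation and verifying these constant identifications, after which the bound follows by direct application of the cited theorem.
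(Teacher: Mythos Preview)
Your proposal is correct and follows essentially the same route as the paper: verify that $\nabla F$ is Lipschitz (the paper computes the constant as $L+\mu$ via the same triangle-inequality argument you outline) and then invoke Theorem~5.12 of \cite{garrigos2023handbook} directly. Your write-up is in fact more detailed than the paper's, which does not spell out the unbiasedness/variance structure or the role of $\Delta_F^*$ and does not address the $L_{\max}$ versus $L+\mu$ discrepancy you flag; the paper simply establishes $(L+\mu)$-smoothness and concludes.
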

    
\begin{proof}
   The corollary follows directly from Theorem 5.12 in \cite{garrigos2023handbook}, which assumes the objective function is Lipschitz. In the case of the SGD jittering training scheme, the lower-level objective function is given by $F(\vx) = \frac{1}{2} ||\vy - \bm{A} \vx||_2^2 + r_\theta(\vx)$. We show that $F$ is $(L + \mu)$-Lipschitz, allowing the result from Theorem 5.12 to be directly applied.

   Assume $f_\theta$ is $L$-Lipschitz, expanding the gradient of $F$, we have,
    \footnotesize
    \begin{equation*}
        \begin{aligned}
            &||\nabla_\vx F(\vx + \Delta) - \nabla_\vx F(\vx)||_2 \\
             = ~&||\bm{A}^\top(\bm{A}(\vx + \Delta) - \vy) + f_\theta(\vx + \Delta) - \bm{A}^\top(\bm{A}\vx - \vy) - f_\theta(\vx)||_2\\
             = ~& ||\bm{A}^\top \bm{A}\Delta +  f_\theta(\vx + \Delta) - f_\theta(\vx)||_2 \\
             \leq ~& ||\bm{A}^\top \bm{A}|| \, ||\Delta||_2 +  ||f_\theta(\vx + \Delta) - f_\theta(\vx)||_2 \\
             \leq ~& \max_{i=1,..,n} \lambda_i  ||\Delta||_2 +  ||f_\theta(\vx + \Delta) - f_\theta(\vx)||_2,
        \end{aligned}
    \end{equation*} \normalsize
    where $\lambda_i$ be the eigenvalues of $\bm{A}^\top \bm{A}$, and let $\mu = \max_{i=1,..,n} \lambda_i$. Thus $F$ is $(L +\mu)$-Lipshitz and we conclude the proof.
    
\end{proof}

\section{Proof of Generalization (Theorem \ref{thm:generalization})}
\label{sec:appendix_generalization_proof}
\begin{theorem}  (Restate Theorem \ref{thm:generalization}) 
    Assuming a small zero-meam Gaussian random vector $\vg$ with $\E||\vg||^2_2 = \sigma_g^2$ is added to data $\vx \in \mathcal{D}$ during evaluation. The generalization risk for denoising problem is defined as, \footnotesize
   \begin{equation} \label{eq:generalization_denoising_proof}
        \mathcal{G}(\theta) = \E_{\vx, \vy \sim \mathcal D, \vg} ~||\vx + \vg - H_\theta(\vy + \vg) ||^2_2.
    \end{equation} \normalsize
    Under assumptions \ref{ass:sgd_noise}, \ref{ass:denoising} and \ref{ass:nn}, SGD jittering generalizes better than regular MSE training, \footnotesize
   \[ \mathcal{G}(\theta^{sgd}) \leq  \mathcal{G}(\theta^{mse}).\] \normalsize
\end{theorem}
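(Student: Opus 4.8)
The plan is to express the generalization risk as the sum of a ``clean'' MSE term plus an explicit penalty term, and then argue that SGD jittering minimizes (an upper bound on) this penalty while keeping the clean term no larger than under MSE training. First I would unroll the lower-level gradient descent for a fixed number of iterations $K$: the MSE solution $H_\theta(\vy)$ is the limit of the clean trajectory $\{\vx_k'\}$ with update $\vx_{k+1}' = \vx_k' - \eta(\vx_k' - \vy) - \eta f_\theta(\vx_k')$ (using the identity forward model of Assumption~\ref{ass:denoising}), while the SGD-jittering solution is the limit of $\{\vx_k^{sgd}\}$ with the extra $\vw_k$ term. Writing $\vx_k^{sgd} = \vx_k' + \bm{\delta}_k$ and expanding the recursion gives $\bm{\delta}_k = \sum_{i=0}^{k-1}\eta(1-\eta)^{k-1-i}(f_\theta(\vx_i') - f_\theta(\vx_i^{sgd})) + \sum_{i=0}^{k}\eta(1-\eta)^{k-i}\vw_i$, exactly the decomposition quoted after the theorem. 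The same algebra applied to the perturbed input $\vy+\vg$ shows that running the clean dynamics on $\vy+\vg$ produces $H_\theta(\vy+\vg)$-type iterates, so that by linearity of the $(1-\eta)$ geometric weighting the quantity $\E\|\vx+\vg - H_\theta(\vy+\vg)\|^2$ can be split (via $\|a+b\|^2 = \|a\|^2 + \|b\|^2 + 2\langle a,b\rangle$ and the zero-mean, independence properties of $\vw_k$, $\vg$, $\vz$) into $\mathcal{G}(\theta) = \underbrace{\E\|\vx - H_\theta(\vy)\|^2}_{=R(\theta)} + (\text{terms from }\vg) + \text{regularization}(\theta)$, where the regularization term is precisely \eqref{eq:regularization}.

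Next I would observe that the SGD-jittering training objective itself admits the same decomposition: $J^{SGD}_{\sigma_{w_k}}(\theta) = \E\|\vx - \hat\vx^{sgd}\|^2$ expands, taking expectation over $\bar\vw$, into $R(\theta)$ plus the same regularization term \eqref{eq:regularization} (the cross term vanishes because $\E\vw_k = 0$ and $\vw_k$ is independent of the clean trajectory). Therefore minimizing $J^{SGD}$ over $\theta$ simultaneously controls $R(\theta)$ and the penalty, whereas minimizing the pure MSE risk $R(\theta)$ ignores the penalty entirely. Since $\theta^{sgd} = \argmin J^{SGD}$ and $\theta^{mse}=\argmin R$, we get $R(\theta^{sgd}) + \text{reg}(\theta^{sgd}) = J^{SGD}(\theta^{sgd}) \le J^{SGD}(\theta^{mse}) = R(\theta^{mse}) + \text{reg}(\theta^{mse})$. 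To convert this into the claimed inequality $\mathcal{G}(\theta^{sgd}) \le \mathcal{G}(\theta^{mse})$ I would use that, under the denoising model, the ``$\vg$-terms'' in the decomposition of $\mathcal{G}$ are either $\theta$-independent (e.g. $\sigma_g^2$ contributions from the $\vx+\vg$ versus $H_\theta(\vy)$ mismatch that depend only on the clean residual already accounted for in $R$) or are themselves dominated by the same regularization quantity, so that $\mathcal{G}(\theta) \le R(\theta) + \text{reg}(\theta) + C$ for a constant $C$ independent of $\theta$, with equality structure tight enough that the chain of inequalities closes. Concretely I would show $\mathcal{G}(\theta^{sgd}) \le R(\theta^{sgd}) + \text{reg}(\theta^{sgd}) + C \le R(\theta^{mse}) + \text{reg}(\theta^{mse}) + C$ and then identify the right-hand side with $\mathcal{G}(\theta^{mse})$ (using that $\text{reg}(\theta^{mse})$ appears in $\mathcal{G}(\theta^{mse})$ by the very definition of the generalization decomposition).

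The main obstacle I anticipate is making the decomposition of $\mathcal{G}(\theta)$ genuinely match the decomposition of $J^{SGD}(\theta)$ — i.e. showing that the penalty induced by the \emph{deterministic} test perturbation $\vg$ coincides with (or is bounded by) the penalty induced by the \emph{stochastic} training noise $\bar\vw$. This requires that $\vg$ and $\vw_k$ enter the unrolled recursion through the same linear operator (the geometric series $\sum_i \eta(1-\eta)^{k-1-i}(\cdot)$ composed with differences of $f_\theta$), which is where Assumption~\ref{ass:denoising} (identity $\bm{A}$, so the data-fidelity gradient is simply $\vx_k - \vy$ and shifting $\vy$ by $\vg$ is equivalent to injecting $\vg$ at iteration $0$) and Assumption~\ref{ass:nn} (twice-differentiability, so the first-order term $\bm{\delta}_k^\top \nabla f_\theta^i$ controls the leading deviation) both get used. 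A secondary subtlety is handling the $K\to\infty$ limit: I would argue that the $L$-Lipschitz/contraction structure (step size $\eta$ with $0<\eta<1$ making $(1-\eta)^{K}\to 0$, cf. Corollary~\ref{corollary:sgd_convergence}) ensures the trajectories and the regularization series converge, so the finite-$K$ identity passes to the limit. If the exact equality between the two penalties fails, the fallback is the weaker but sufficient claim that both are nonnegative and that $\mathcal{G}(\theta) = R(\theta) + \text{reg}(\theta)$ up to $\theta$-independent constants under the denoising assumptions, which still yields the theorem.
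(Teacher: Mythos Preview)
Your proposal follows essentially the same route as the paper: unroll both the $\vg$-perturbed evaluation trajectory and the SGD-jittering training trajectory relative to the clean GD trajectory $\{\vx_k'\}$, decompose both $\mathcal{G}(\theta)$ and $J^{SGD}(\theta)$ as $R(\theta)$ plus a penalty built from $\sum_i \eta(1-\eta)^{K-1-i}(f_\theta(\vx_i')-f_\theta(\cdot))$, and then use that the SGD minimizer controls the penalty while the MSE minimizer does not. The paper's key identification step is also the one you flag as the main obstacle: under Assumption~\ref{ass:denoising} the perturbation $\vg$ to $\vy$ and the iteration-wise noises $\vw_k$ enter the recursion through the same linear structure, and by choosing the $\sigma_{w_k}^2$ so that $\sum_{i=0}^k \eta^2(1-\eta)^{2(k-i)}\sigma_{w_i}^2=\sigma_g^2$ the $\vg$-perturbed trajectory becomes a special instance of the SGD trajectory in distribution.

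Two concrete errors in your decomposition are worth fixing. First, in the denoising case the $\vg$ cancels \emph{exactly}: the unrolling gives $H_\theta(\vy+\vg)=\vx_K'+\vg+\sum_i \eta(1-\eta)^{K-1-i}(f_\theta(\vx_i')-f_\theta(\vx_i))$, so $\vx+\vg-H_\theta(\vy+\vg)=(\vx-\vx_K')-\sum_i(\cdots)$ and there are no residual ``$\vg$-terms'' at all---the decomposition of $\mathcal{G}(\theta)$ is precisely $R(\theta)$, the squared penalty, and a cross term, nothing else. Second, the cross term in $J^{SGD}$ does \emph{not} vanish: while $\E\langle \vx-\vx_K',\sum_i\eta(1-\eta)^{K-i}\vw_i\rangle=0$ by independence, the remaining cross term $-2\E\langle \vx-\vx_K',\sum_i\eta(1-\eta)^{K-1-i}(f_\theta(\vx_i')-f_\theta(\vx_i^{sgd}))\rangle$ survives, because $f_\theta(\vx_i')-f_\theta(\vx_i^{sgd})$ is a nonlinear function of the $\vw$'s and is not zero-mean. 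The paper keeps this cross term in both decompositions; once you carry it through, your minimizer inequality $J^{SGD}(\theta^{sgd})\le J^{SGD}(\theta^{mse})$ (which is in fact a cleaner phrasing than the paper's) transfers directly to $\mathcal{G}$ without needing the extra bounding step $\mathcal{G}(\theta)\le R(\theta)+\text{reg}(\theta)+C$ you propose.
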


\begin{proof}
    For any trained model $\theta$, we have generalization risk in (\ref{eq:generalization_denoising_proof}). 
    We can express $H_\theta(\vy + \vg)$ in terms of $H_\theta(\vy)$ for the same set of parameters via iterative expansions. For iterations $k = 0, 1, ..., K$, let the sequence $\{\vx_0, \vx_1, ..., \vx_{K} \}$ denote the denoising trajectory from observation $\vy + \vg$, where $H_\theta(\vy + \vg) = \vx_{K} $, and let $\{\vx_0', \vx_1', ..., \vx_{K}' \}$ denote the denoising trajectory from observation $\vy$ where $H_\theta(\vy) = \vx_{K}'$. Then we can rewrite the reconstruction trajectory with the appearance of $\vg$ as follows,
    \footnotesize
    \begin{equation*}
        \begin{aligned}
            \vx_0 &= \vx_0' + \vg = \vy + \vg \\
            \vx_1 &= (1-\eta)\vx_0 + \eta(\vy + \vg) - \eta f_\theta(\vx_0) \\
                  & = \big( (1-\eta)\vx_0' + \eta \vy - \eta f_\theta(\vx_0') \big) + \big(\vg + \eta f_\theta(\vx_0') - \eta f_\theta(\vx_0) \big)\\
                  & = \vx_1' + \vg + \eta (f_\theta(\vx_0') - f_\theta(\vx_0)) \\
            \vx_2 &= (1-\eta)\vx_1 + \eta(\vy + \vg) - \eta f_\theta(\vx_1) \\
                  &= \vx_2' + \vg + \eta (1-\eta) (f_\theta(\vx_0') - f_\theta(\vx_0)) + \eta (f_\theta(\vx_1') - f_\theta(\vx_1)) \\
                  & \quad ... \\
            \vx_{k+1} &= \vx_{k+1}' + \vg + \sum_{i=0}^k \eta (1-\eta)^{k-i} (f_\theta(\vx_i') - f_\theta(\vx_i))
        \end{aligned}
    \end{equation*}
    \normalsize
    Let $K = k+1$ and plug in the above expansion to (\ref{eq:generalization_denoising}). For brevity, we omit the subscripts of expectations in the proof.
    \footnotesize
    \begin{equation*}
        \begin{aligned}
             \mathcal{G}(\theta) 
                &= \E || \vx - \vx_{K}' - \sum_{i=0}^{K-1} \eta (1-\eta)^{K-1-i} (f_\theta(\vx_i') - f_\theta(\vx_i))||^2_2 \\
                &= \E|| \vx - \vx_{K}'||_2^2 +  \E||\sum_{i=0}^{K-1} \eta (1-\eta)^{K-1-i} (f_\theta(\vx_i') - f_\theta(\vx_i))||^2_2 -2 \E \big[ \langle \vx - \vx_K', \sum_{i=0}^{K-1} \eta (1-\eta)^{K-1-i} (f_\theta(\vx_i') - f_\theta(\vx_i)) \rangle \big]
        \end{aligned}
    \end{equation*} \normalsize
    The first term is the testing accuracy when $\vx, \vy$ are in the data distribution, while the second and third terms are affected by $\vg$. MSE training minimizes the first term solely, but no extra regularization on the other terms. On the other hand, we will show that SGD jittering training also penalizes the magnitude of the other terms, thus obtaining a smaller generalization risk than MSE training.
    
    We view the SGD jittering process as a noisy version of regular MSE training, the goal is to write the SGD jittering risk in terms of the noiseless updates with some implicit regularization. Let $\{\vx_0', \vx_1', ..., \vx_{K}' \}$ denote the noiseless GD trajectory of a model $\theta$. We write the iterative updates of SGD jittering training $\{\vx_0^{sgd}, \vx_1^{sgd}, ..., \vx_{K}^{sgd} \}$ for the same set of parameters $\theta$ in terms of $\vx_k'$s, which tells the deviation of each intermediate reconstruction between the SGD jittering and its noiseless counterpart,
    \footnotesize
    \begin{equation*}
        \begin{aligned}
            \vx_0^{sgd} &= \vy = \vx_0' + \vw_0\\
            \vx_1^{sgd} &= (1-\eta)\vx_0^{sgd} + \eta \vy - \eta f_\theta(\vx_0^{sgd}) - \eta \vw_1 \\
                        &= \vx_1'  + \eta (f_\theta(\vx_0') - f_\theta(\vx_0^{sgd})) - (\eta (1-\eta)\vw_0 + \eta \vw_1) \\
            \vx_2^{sgd} &= (1-\eta)\vx_1^{sgd} + \eta \vy - \eta f_\theta(\vx_1^{sgd}) - \eta \vw_2 \\
                        &= \vx_2' + \eta \big(f_\theta(\vx_1') - f_\theta(\vx_1^{sgd})\big) + \eta(1-\eta) \big(f_\theta(\vx_0') - f_\theta(\vx_0^{sgd})\big) - \big(\eta (1-\eta)^2\vw_0 + \eta (1-\eta) \vw_1 + \eta \vw_2\big) \\
                       & ...\\
          \vx_{k+1}^{sgd} &= (1-\eta)\vx_k^{sgd} + \eta \vy - \eta f_\theta(\vx_k^{sgd}) - \eta \vw_{k+1} \\
                          &= \vx_{k+1}' + \sum_{i=0}^k \eta (1-\eta)^{k-i} (f_\theta(\vx_i') - f_\theta(\vx_i^{sgd})) - \sum_{i=0}^{k+1}\eta(1-\eta)^{k+1-i}\vw_i.
        \end{aligned}
    \end{equation*}
    \normalsize
    Let $K = k+1$. Since the jittering noise is zero-mean, the SGD jittering training risk becomes,
    \footnotesize
    \begin{equation*}
        \begin{aligned}
            \E||\vx - \vx_{K}^{sgd}||_2^2 
            &= \E|| \vx - \vx_{K}'||_2^2  + \E ~||\sum_{i=0}^{K}\eta(1-\eta)^{K-i}\vw_i||_2^2  + \E ~ || \sum_{i=0}^{K-1} \eta (1-\eta)^{K-1-i} (f_\theta(\vx_i') - f_\theta(\vx_i^{sgd})) ||_2^2 \\
            &\quad - 2 \E~  \left[\langle \vx - \vx_K' ,  \sum_{i=0}^{K-1} \eta (1-\eta)^{K-1-i} (f_\theta(\vx_i') - f_\theta(\vx_i^{sgd}))\rangle \right].
        \end{aligned}
    \end{equation*}
    \normalsize
    The first component corresponds to the MSE loss when the trajectory is not perturbed by jittering noise. The second term is weighted noise variance independent of the network parameters. The third term adds extra regularization to the difference in outputs of $f_\theta$ between noisy and clean trajectories. It penalizes more as the iterations approach the final output. The last term is the cross product between the reconstruction error under noiseless GD trajectory and the perturbed reconstructions from $f$, which is lower bounded since the overall SGD jittering risk is non-negative. 

    When choosing $\sigma_{\vw_k}^2$ at iteration $k$ such that $\sum_{i=0}^{k}\eta^2(1-\eta)^{2(k-i)}\sigma_{w_i}^2 = \sigma_\vg^2$, the reconstruction from $\vx + \vg$ is a special case in SGD jittering training where $\vw_k$ are independently sampled at each iteration. Thus, minimizing the regularization term $ \E_{(\vx, \vy)\sim \mathcal{D}, \vw_k\forall k} ~ || \sum_{i=0}^{K-1} \eta (1-\eta)^{K-1-i} (f_\theta(\vx_i') - f_\theta(\vx_i^{sgd})) ||_2^2$ also reduces the penalty term in computing generalization accuracy $\E_{\vx, \vy \in \mathcal D, \vg} ||\sum_{i=0}^{K-1} \eta (1-\eta)^{K-1-i} (f_\theta(\vx_i') - f_\theta(\vx_i))||^2_2 $. Minimizing the last term also minimizes $-2 \E \big[ \langle \vx - \vx_K', \sum_{i=0}^{K-1} \eta (1-\eta)^{K-1-i} (f_\theta(\vx_i') - f_\theta(\vx_i)) \rangle$ in the generalization accuracy. Therefore, $\mathcal{G}(\theta^{sgd}) \leq \mathcal{G}(\theta^{mse})$ due to the implicit regularization term in SGD jittering training. 

\end{proof}

\section{Proof of Robustness (Theorem \ref{thm:robustness})} \label{appendix:robustness_proof}
\begin{theorem} (Restate Theorem \ref{thm:robustness})
        Let $\ve$ be the perturbation vector and let $\mathcal{P}_e$ be iid zero-mean Gaussian distribution such that $\E||\ve||_2^2 = \sigma_\ve^2$. The average-case robustness risk for denoising problem is as follows,
        \begin{equation}
            R_{e}(\theta) = \mathbb{E}_{(\vx,\vy) \sim \mathcal{D}, \ve \sim \mathcal{P}_e} ~\left[ 
                  ||\vx - H_\theta(\vy + \ve) ||^2_2 \right].
        \end{equation}
        For the denoising problem, SGD jittering is more robust than MSE training against bounded-variance perturbations around the measurement, or
        \[R_{e}(\theta^{sgd}) \leq R_{e}(\theta^{mse}).\]
    \end{theorem}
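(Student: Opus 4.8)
The plan is to mirror the proof of Theorem~\ref{thm:generalization}, using that in denoising ($\bm{A}=\bm{I}$) the GD recursion $\vx_{k+1}=(1-\eta)\vx_k+\eta\vy-\eta f_\theta(\vx_k)$ carries an additive input perturbation with coefficient exactly $1$ at every iteration, since the affine map $c\mapsto(1-\eta)c+\eta$ fixes $c=1$. Concretely, writing $\{\vx_k'\}$ for the clean trajectory from $\vy$ and $\{\vx_k\}$ for the trajectory from $\vy+\ve$, the same induction as in Appendix~\ref{sec:appendix_generalization_proof} gives
\[
H_\theta(\vy+\ve)=\vx_K'+\ve+S_\ve,\qquad S_\ve:=\sum_{i=0}^{K-1}\eta(1-\eta)^{K-1-i}\bigl(f_\theta(\vx_i')-f_\theta(\vx_i)\bigr).
\]
Substituting into $R_e(\theta)=\E\|\vx-H_\theta(\vy+\ve)\|_2^2$ and using that $\ve$ is zero-mean and independent of $(\vx,\vy)$ (hence of $\vx_K'$), the cross term $\E\langle\vx-\vx_K',\ve\rangle$ vanishes and
\[
R_e(\theta)=\underbrace{\E\|\vx-\vx_K'\|_2^2}_{\text{MSE term}}\;+\;\sigma_\ve^2\;+\;\E\|S_\ve\|_2^2\;+\;2\E\langle\ve,S_\ve\rangle\;-\;2\E\langle\vx-\vx_K',S_\ve\rangle .
\]
The first term is exactly the objective minimized by regular MSE training; the $\sigma_\ve^2$ is a $\theta$-independent constant that drops out of the desired inequality; the last three terms are the perturbation-dependent contributions, which MSE training does not control.

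Next I would reuse the SGD-jittering risk decomposition already derived in Appendix~\ref{sec:appendix_generalization_proof},
\[
J^{SGD}_{\sigma_{w_k}}(\theta)=\E\|\vx-\vx_K'\|_2^2+\E\Bigl\|\sum_{i=0}^{K}\eta(1-\eta)^{K-i}\vw_i\Bigr\|_2^2+\E\|S_w\|_2^2-2\E\langle\vx-\vx_K',S_w\rangle,
\]
where $S_w:=\sum_{i=0}^{K-1}\eta(1-\eta)^{K-1-i}(f_\theta(\vx_i')-f_\theta(\vx_i^{sgd}))$ and the $\vw$-cross terms are handled exactly as there. Choosing the jittering schedule so that the accumulated SGD noise matches the variance of $\ve$, i.e.\ $\sum_{i=0}^{k}\eta^2(1-\eta)^{2(k-i)}\sigma_{w_i}^2=\sigma_\ve^2$ for each $k$, makes the $\ve$-perturbed reconstruction a special case of an SGD-jittering trajectory, so $S_\ve$ and $S_w$ share the same statistics; thus the implicit regularizer $\E\|S_w\|_2^2$ that SGD jittering \emph{explicitly} penalizes equals the $\E\|S_\ve\|_2^2$ appearing in $R_e$. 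The two remaining perturbation terms in $R_e$ are then slaved to it: $|\E\langle\ve,S_\ve\rangle|\le\sigma_\ve\,(\E\|S_\ve\|_2^2)^{1/2}$ by Cauchy--Schwarz (and is $O(\E\|S_\ve\|_2^2)$ by the first-order Taylor expansion of $f_\theta$ used around~(\ref{eq:regularization})), while $-2\E\langle\vx-\vx_K',S_\ve\rangle$ is bounded below by using $R_e(\theta)\ge0$, exactly as the analogous cross term is handled in the generalization proof. Since $\theta^{sgd}$ minimizes $J^{SGD}$ it drives down both the MSE term and the implicit regularizer, whereas $\theta^{mse}$ minimizes only the MSE term and leaves the perturbation terms uncontrolled; combining the two decompositions yields $R_e(\theta^{sgd})\le R_e(\theta^{mse})$.

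The step I expect to be the main obstacle is the same one the proof of Theorem~\ref{thm:generalization} leans on: justifying that the $\ve$-perturbed reconstruction is ``a special case'' of SGD jittering and that this genuinely transfers the implicit regularization. The deviation process for $\ve$ (one perturbation injected at $k=0$ and propagated) and that for SGD jittering (a fresh $\vw_k$ at each iteration) have the same per-iteration second moment under the chosen schedule but are not equal in distribution, so one must argue that only second-moment information enters $\E\|S\|_2^2$ to leading order --- through the Taylor expansion of $f_\theta$, which is where Assumption~\ref{ass:nn} is used --- and then control the higher-order remainder. Relatedly, one must verify that the leftover $\vw$-cross terms in $J^{SGD}$ and the term $\E\langle\ve,S_\ve\rangle$ are of strictly lower order than the regularizer rather than merely comparable to it; this is precisely where the ``small perturbation'' hypothesis (small $\sigma_\ve$, matched to small $\sigma_{w_k}$) does the real work.
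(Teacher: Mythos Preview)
Your proposal is correct and follows essentially the same route as the paper: the same iterative expansion $\vx_K^{at}=\vx_K'+\ve+S_\ve$, the same square-expansion of $R_e$, the same variance-matching condition $\sum_{i=0}^{k}\eta^2(1-\eta)^{2(k-i)}\sigma_{w_i}^2=\sigma_\ve^2$, and the same argument that the implicit regularizer in $J^{SGD}$ controls the perturbation terms that MSE training leaves uncontrolled. One small difference: you retain the cross term $2\E\langle\ve,S_\ve\rangle$ and bound it via Cauchy--Schwarz, whereas the paper's expansion silently drops it; your treatment is more complete here, and the concerns you flag about the ``special case'' identification and the second-moment-only matching apply equally to the paper's own argument.
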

\begin{proof}
    The proof is similar to proving generalization accuracy. To evaluate the average-case robustness for any trained $\theta$, we first write the iterative updates under attack $\{\vx_0^{at}, \vx_1^{at},... \vx_K^{at}\}$ in terms of the noiseless trajectory $\{\vx_0', \vx_1',... \vx_K'\}$.
    \footnotesize
    \begin{equation*}
        \begin{aligned}
            \vx_0^{at} &= \vx_0' + \ve = \vy + \ve \\
            \vx_1^{at} &= (1-\eta)\vx_0^{at} + \eta(\vy + \ve) - \eta f_\theta(\vx_0^{at}) \\
                  &= \vx_1' + \ve + \eta (f_\theta(\vx_0') - f_\theta(\vx_0^{at})) \\
            \vx_2^{at} &= (1-\eta)\vx_1^{at} + \eta(\vy + \ve) - \eta f_\theta(\vx_1^{at}) \\
                  &= \vx_2' + \ve + \eta (1-\eta) (f_\theta(\vx_0') - f_\theta(\vx_0^{at})) + \eta (f_\theta(\vx_1') - f_\theta(\vx_1^{at})) \\
                  & \quad ... \\
            \vx_{k+1}^{at} &= \vx_{k+1}' + \ve + \sum_{i=0}^k \eta (1-\eta)^{k-i} (f_\theta(\vx_i') - f_\theta(\vx_i^{at}))
        \end{aligned}
    \end{equation*} \normalsize
    Let $K = k+1$, we rewrite the robustness risk. Notice that the generalization loss in (\ref{eq:generalization_denoising}) includes the additional vector $\vg$, whereas the robustness risk in (\ref{eq:avg_robustness_denoising}) does not. 
    \footnotesize
    \begin{equation*}
        \begin{aligned}
             &R_{e}(\theta) 
                = \E || \vx - \vx_{K}' - \ve - \sum_{i=0}^{K-1} \eta (1-\eta)^{K-1-i} (f_\theta(\vx_i') - f_\theta(\vx_i^{at}))||^2_2 \\
                &= \E || \vx - \vx_{K}'||_2^2 +  \mathbb{E} ||\sum_{i=0}^{K-1} \eta (1-\eta)^{K-1-i} (f_\theta(\vx_i') - f_\theta(\vx_i^{at}))||^2_2  +\mathbb{E}||\ve||_2^2 - 2 \E \left[ \langle \vx - \vx_K', \sum_{i=0}^{K-1} \eta (1-\eta)^{K-1-i} (f_\theta(\vx_i') - f_\theta(\vx_i^{at})) \rangle \right] \\
                &= \E || \vx - \vx_{K}'||_2^2 +  \mathbb{E} ||\sum_{i=0}^{K-1} \eta (1-\eta)^{K-1-i} (f_\theta(\vx_i') - f_\theta(\vx_i^{at}))||^2_2  + \sigma_\ve^2 - 2 \E \left[ \langle \vx - \vx_K', \sum_{i=0}^{K-1} \eta (1-\eta)^{K-1-i} (f_\theta(\vx_i') - f_\theta(\vx_i^{at})) \rangle \right]
        \end{aligned}
    \end{equation*} \normalsize
    The first term is the mean-squared testing loss for in-distribution data. The second term is an additional penalty that appears in SGD jittering training. When picking $\sigma_{\vw_k}^2$ for all $k$ and $\sigma_\ve^2$ such that $\sum_{k=0}^{k}\eta^2(1-\eta)^{2(k-i)}\sigma_{w_i}^2 = \sigma_\ve^2$, this term in robustness risk is a special case in SGD jittering risk. The last term also appears in the SGD jittering risk; therefore, minimizing the SGD jittering risk implicitly reduces this term as well. Therefore, for models that perform equally well in MSE for in-distribution testing data, minimizing SGD jittering risk improves the robustness.
\end{proof}

\section{Extra Stochastic Proximal Gradient Descent Experiments} \label{sec:appendix_experiments}
Similar to gradient descent variants of loop unrolling architectures, for proximal gradient descent (PGD) variants, we train a PGD LU with regular MSE loss. Then finding the worst-case attack to $\vy$ over all PGD updates.

\subsection{Seismic Deconvolution}
In the same seismic deconvolution problem setup discussed in the main manuscript, but now solved using proximal gradient variants, Figure \ref{fig:spgd-seis}(a) shows the performance of the trained models under an adversarial attack with $\epsilon=1$. Figure \ref{fig:spgd-seis}(b) focuses on reconstructing the additional layer highlighted in red boxes. Table \ref{tab:prox_seis} provided numerical evaluations. The proposed SPGD jittering demonstrates greater robustness compared to MSE training while better preserving detailed information than adversarial training (AT). 

\begin{figure*}[h]
    \centering
    \includegraphics[width=0.97\linewidth]{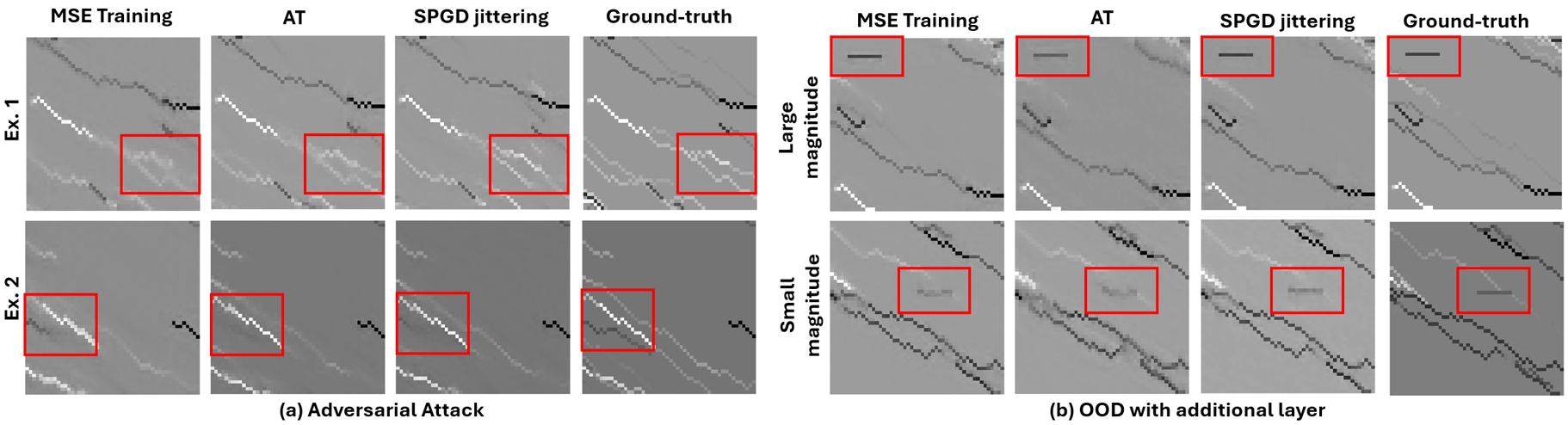}
    \caption{Visualization of seismic deconvolution using (stochastic) \textbf{proximal gradient descent} variants within loop unrolling architectures. Figure (a) illustrates the reconstruction performance under adversarial attack with $\epsilon = 1$. Figure (b) evaluates the model's performance on out-of-distribution (OOD) seismic data, where the ground-truth reflectivity includes an additional small horizontal layer, highlighted in the red boxes. }
    \label{fig:spgd-seis}
\end{figure*}

\begin{table}[h]
\footnotesize
    \centering
    \begin{tabular}{cccc}
       PSNR / SSIM          & in-distribution   & Adv. Attack &  OOD  \\ \midrule
       MSE training          & \underline{34.91} / \underline{0.923}     & 28.78 / 0.842     & \underline{34.810} / \underline{0.920} \\ \midrule
       AT                    & 32.99 / 0.899     & \textbf{31.44} / \textbf{0.875}     & 32.966 / 0.899 \\ \midrule
       Input Jittering       & 33.02 / 0.899     & \underline{30.56} / 0.861     & 32.910 / 0.895 \\ \midrule
       SPGD Jittering         & \textbf{35.05} / \textbf{0.931}     & 29.93 / \underline{0.870}     & \textbf{34.952} / \textbf{0.929}
    \end{tabular}
    \caption{PGD variants of LU are evaluated on seismic deconvolution across three scenarios: in-distribution data (column 2), adversarial attacks (column 3), and out-of-distribution (OOD) data (column 4). The best performance in each case is highlighted in bold, while the second-best performance is underlined.}
    \label{tab:prox_seis}
    \normalsize
\end{table}

\subsection{Accelerated MRI Reconstruction}
We complete the missing numerical results discussed in the main manuscript for $4\times$ accelerated MRI reconstruction using SPGD jittering in Table \ref{tab:prox_mri}.
\begin{table}[h]
\footnotesize
    \centering
    \begin{tabular}{cccc}
       PSNR / SSIM          & fastMRI   & Adv. Attack &  Tumor Cell  \\ \midrule
       MSE training         & \underline{27.93} / \underline{0.582}     & 22.82 / 0.179     & \underline{29.39} / \underline{0.771} \\ \midrule
       AT                   & 27.20 / 0.533     & \underline{26.78} / \underline{0.518}     & 27.13 / 0.560 \\ \midrule
       Input Jittering      & 24.88 / 0.299     & 24.44 / 0.297   & 25.85 / 0.478 \\ \midrule
       SPGD Jittering        & \textbf{28.15} / \textbf{0.597}     & \textbf{27.56} / \textbf{0.577}    & \textbf{29.762} / \textbf{0.777}
    \end{tabular}
    \caption{PGD variants of LU are evaluated on MRI data across three scenarios: in-distribution data (column 2), adversarial attacks (column 3), and out-of-distribution (OOD) data (column 4). The best performance in each case is highlighted in bold, while the second-best performance is underlined.}
    \label{tab:prox_mri}
    \normalsize
\end{table}

\section{Compared to Other Baselines for MRI reconstruction} \label{section:extra_mri}
We evaluate our method against a widely used robustness technique Lipschitz regularization using spectral normalization (SN). SN constrains the Lipschitz constant of the network by bounding the spectral norm (i.e., largest singular value) of each weight matrix, which helps stabilize training and improve robustness. As shown in the second last row in Table \ref{tab:sn} for 4xMRI reconstruction. SN promotes adv. robustness, but at the cost of reduced accuracy to some extend. This is likely due to its restrictive nature of SN limiting the model’s expressive power.

Additionally, we also compares the performance to a diffusion models (DMs), as they also demonstrates impressive results in image generation. Since our work proposes a general framework for IPs, we chose to compare against the standard DDPM rather than specialized task-specific DMs, consistent with prior work \cite{gungor2023adaptive}. To ensure a fair comparison under similar computational constraints, we adapt the denoising U-Net to fit within the same GPU memory as other methods.

Table \ref{tab:sn} compares DDPM with other methods. DDPM achieves comparable in-distribution (ID) performance to MBAs trained with both MSE loss and the proposed SGD jittering, and shows stronger robustness than standard MSE-trained MBAs. However, it underperforms in OOD generalization. While DDPM serves as a strong baseline for robustness and ID accuracy, SGD jittering achieves better generalization under distribution shifts. We will add the comparison to DM in the main manuscript as an interesting baseline.

It is also worth noting that DDPM requires ~10× more parameters and is significantly more data-intensive, whereas MBAs are more data-efficient \cite{monga2021algorithm} due to their optimization-inspired iterative structure. We also refer to prior work \cite{gungor2023adaptive}, which compares DDPM, DiffRecon, AdaDiff to MSE-trained MBAs for MRI reconstruction. Their results show that while DM can generalize well in some cases, MBA methods consistently perform better on ID data. Results in \cite{gungor2023adaptive} shows that MSE-trained MBA is a strong baseline, and our proposed SGD jittering further improves their robustness and generalization.

\begin{table}[h]
\footnotesize
    \centering
    \begin{tabular}{ccccc}
      Model     & Training Scheme/Design  & fastMRI   & Adv. Attack &  Tumor Cell  \\ \midrule
      \multirow{5}{*}{GD LU} & MSE training         & \underline{28.21} / \underline{0.603}     & 25.68 / 0.382     & 29.92 / {0.779} \\ 
       & AT                   & 27.68 / 0.564     & \textbf{27.17} / \underline{0.549}     & 27.74 / 0.597 \\
       & Input Jittering      & 28.18 / 0.595     & 25.05 / 0.420   & \underline{29.97} / 0.740 \\
       & SGD jittering  (Ours)       & \textbf{28.22} / \underline{0.607}     & {26.77} / \textbf{0.552}    & \textbf{30.36} / \textbf{0.788} \\
       & Lipschitz regularization (SN) & 27.71 / 0.576	& {27.09} / 0.542	 &27.92 / 0.594 \\
       \midrule
       DDPM & Default training & 28.17 / \textbf{0.611} & \underline{27.13} / 0.536 & 29.72 / \underline{0.782}
    \end{tabular}
    \caption{MRI reconstruction evaluation compared to the Lipschitz regularization method (last row). The best and second best performances are in bold and underlined respectively.}
    \label{tab:sn}
    \normalsize
\end{table}

\section{Training Details}
We use 10-iteration gradient descent loop unrolling architectures for all tasks. For the toy example, we use a 3-layer MLP with a hidden dimension of 32 as the learned gradient network. For seismic deconvolution and MRI reconstruction, a 5-layer and 8-layer DnCNN with 64 hidden channels are used for the learned gradient network, respectively. 
All models are trained using Nvidia RTX3080, using Adam optimizer with a learning rate of $1e-4$. 

The jittering noise variance for each method is selected based on performances of both robustness and accuracy. In input jittering training, the three methods in order of 2D denoising, seismic deconvolution and MRI reconstruction use the noise variance of 0.01, 0.05  and 0.05. In SGD jittering training, we choose the SGD jittering variance for each task with 0.01, 0.1 and 0.01 respectively. 


\section{Training Speed} \label{appendix:train_speed}
We also compare the training speed among different training schemes for GD and PGD LU architectures. All methods are trained on the same device. Training batch sizes for the 2D denoising problem, seismic deconvolution and MRI are 256, 16, and 4 respectively. AT is trained by projected gradient descent. For the 2D denoising problem, $\epsilon=0.01$ with 50 projected GD iterations and a stepsize of 0.05. For the seismic deconvolution and MRI reconstruction problem, $\epsilon=1$, with 20 projected GD iterations and a stepsize of 0.1. AT is significantly slower than other methods. Notice that the jittering in SGD and SPGD does introduce very minimal time overhead compared to standard MSE training, even when accounting for the additional noise sampling steps.

\begin{table}[h]
\footnotesize
    \centering
    \begin{tabular}{ccccc}
                  & & 2D denoising   & Seis. Deconv. & MRI  \\ \midrule
    \multirow{4}{*}{GD LU}  &  MSE training         & 5360.84 it/s      & 23.69 it/s    & 5.23 it/s \\ 
                            &  AT                   & 562.75 it/s       & 1.53 it/s     & 0.25 it/s \\ 
                            &  Input Jittering      & 5256.77 it/s      & 22.80 it/s    & 5.20 it/s \\ 
                            &  SGD jittering        & 4337.84 it/s      & 22.72 it/s    & 5.27 it/s \\ \midrule
    \multirow{4}{*}{PGD LU}  &  MSE training         & 5136.34 it/s    & 16.78 it/s     & 6.23 it/s \\ 
                             &  AT                   & 486.68 it/s    & 1.12 it/s     & 0.17 it/s \\ 
                             &  Input Jittering      & 4965.25 it/s    & 16.44 it/s     & 6.17 it/s \\ 
                             &  SPGD jittering       & 4669.76 it/s    & 14.52 it/s     & 6.07 it/s 
    \end{tabular}
    \caption{Training speed measured in items per second are recorded.}
    \label{tab:runtime}
    \normalsize
\end{table}

\end{document}